  \providecommand\BibTeX{{%
    \normalfont B\kern-0.5em{\scshape i\kern-0.25em b}\kern-0.8em\TeX}}}
\newtheorem{lemma}{Lemma}
\newtheorem{assumption}{Assumption}
\newtheorem{claim}{Claim}
\newtheorem{theorem}{Theorem}
\newenvironment{proofsketch}{%
  \proof}{\endproof}
\begin{document}

\fancyhead{}
\title{Jointly-Learned State-Action Embedding for Efficient Reinforcement Learning}

\author{Paul J. Pritz}
\email{paul.pritz18@imperial.ac.uk}
\affiliation{%
  \institution{Imperial College London}
  \city{London}
  \country{United Kingdom}
}

\author{Liang Ma}
\email{lma@dataminr.com}
\affiliation{%
  \institution{Dataminr}
  \city{New York}
  \country{USA}}

\author{Kin K. Leung}
\email{kin.leung@imperial.ac.uk}
\affiliation{%
  \institution{Imperial College London}
  \city{London}
  \country{United Kingdom}
}

\renewcommand{\shortauthors}{Pritz and Ma, et al.}

\begin{abstract}
    While reinforcement learning has achieved considerable successes in recent years, state-of-the-art models are often still limited by the size of state and action spaces.
    Model-free reinforcement learning approaches use some form of state representations and the latest work has explored embedding techniques for actions, both with the aim of achieving better generalization and applicability.
    However, these approaches consider only states or actions, ignoring the interaction between them when generating embedded representations.
    In this work, we establish the theoretical foundations for the validity of training a reinforcement learning agent using embedded states and actions.
    We then propose a new approach for jointly learning embeddings for states and actions that combines aspects of model-free and model-based reinforcement learning, which can be applied in both discrete and continuous domains.
    Specifically, we use a model of the environment to obtain embeddings for states and actions and present a generic architecture that leverages these to learn a policy.
    In this way, the embedded representations obtained via our approach enable better generalization over both states and actions by capturing similarities in the embedding spaces.
    Evaluations of our approach on several gaming, robotic control, and recommender systems show it significantly outperforms state-of-the-art models in both discrete/continuous domains with large state/action spaces, thus confirming its efficacy.
\end{abstract}

\begin{CCSXML}
<ccs2012>
   <concept>
       <concept_id>10010147.10010257.10010258.10010261.10010272</concept_id>
       <concept_desc>Computing methodologies~Sequential decision making</concept_desc>
       <concept_significance>500</concept_significance>
       </concept>
 </ccs2012>
\end{CCSXML}

\ccsdesc[500]{Computing methodologies~Sequential decision making}
\keywords{reinforcement learning, embeddings, representation learning}

\maketitle

\section{Introduction}
\label{sec:intro}

\textit{Reinforcement learning} (RL) has been successfully applied to a range of tasks, including challenging gaming scenarios \citep{mnih2015humanlevel}.
However, the application of RL in many real-world domains is often hindered by the large number of possible states and actions these settings present.
For instance, resource management in computing clusters \citep{RL_resource_management_hotnets_2016,google_datacenter_RL_2016}, portfolio management \citep{jian_portfolio_maagement_RL}, and recommender systems \citep{recommender_RL_1_Yu_2019,recommender_RL_Liu_2018} all suffer from extremely large state/action spaces, thus challenging to be tackled by RL.

In this work, we investigate efficient training of reinforcement learning agents in the presence of large state-action spaces, aiming to improve the applicability of RL to real-world domains.
Previous work attempting to address this challenge has explored the idea of learning representations (embeddings) for states or actions, either by adding additional layers to the RL agent's network architecture or by using separately trained embedding models.
Specifically, for state embeddings, using machine learning to obtain meaningful features from raw state representations is a common practice in RL, e.g. through the use of convolutional neural networks for image input \citep{mnih2013atari}.
Previous works such as  \citet{ha_schmidhuber_world_models_2018} have explored the use of environment models, termed \emph{world models}, to learn state representations in a supervised fashion, and several pieces of literature explore state aggregation using bisimulation metrics \citep{castro_2020_scalable_methods_for_state_similarity}.
While for action embeddings, the most recent works by \citet{Tennenholtz2019} and \citet{Chandak2019} propose methods for learning embeddings for discrete actions that can be directly used by an RL agent and improve generalization over actions.
However, these works consider the state representation and action representation as isolated tasks, which ignore the underlying relationships between them.
In this regard, we take a different approach and propose to jointly learn embeddings for states and actions, aiming for better generalization over both states and actions in their respective embedding spaces.

More importantly, even with state or action embedding as in some existing works, there is generally a lack of theoretical guarantee that ensures the learned policy in the embedding space can also achieve optimality in the original problem domain.
To this end, we establish the theoretical foundations showing how the policy learned purely in the state and action embedding space is linked to the optimal policy in the original problem domain.
We then propose an architecture consisting of two models: a model of the environment that is used to generate state and action representations and a model-free RL agent that learns a policy using the embedded states and actions.
One key benefit of this approach is that state and action representations can be learned in a supervised manner, which greatly improves sampling efficiency and potentially enables their use for transfer learning.
In sum, our key contributions are:
\begin{itemize}
    \item We establish the theoretical relationship between learning a policy in the original problem domain and learning an internal policy in embedding space, and prove the equivalence between updates to these two policies.
    \item We formulate an embedding model for states and actions, along with an internal policy $\pi_i$ that leverages the learned state/action embeddings, as well as the corresponding overall policy $\pi_o$ that acts in the original problem domain. We show the existence of an overall policy $\pi_o$ that achieves optimality in the original problem domain.
    \item We present a supervised learning algorithm for the proposed embedding model that can be combined with any policy gradient based RL algorithm.
    \item We evaluate our methodology on  game-based as well as real-world tasks and find that it outperforms state-of-the-art models in both discrete/continuous domains.
\end{itemize}

The remainder of this paper is structured as follows: In Section \ref{sec:background}, we provide some background on RL.
We then give an overview of related work in Section \ref{sec:rel_work}. The proposed methodology is presented in Section \ref{sec:method}, followed by the evaluations in Section \ref{sed:experiments}. Section~\ref{sec:conclusion} concludes the paper.

\section{Background}
\label{sec:background}
We consider an agent interacting with its environment over discrete time steps, where the environment is modelled as a discrete-time \textit{Markov decision process} (MDP), defined by a tuple $(\mathcal{S}, \mathcal{A}, \mathcal{T}, \mathcal{R}, \gamma)$.
$\mathcal{S}$ and $\mathcal{A}$ denote the \textit{state space} and \textit{action space}, respectively.
In this work, we consider both discrete and continuous state and action spaces.
The transition function from one state to another, given an action, is $\mathcal{T} : \mathcal{S} \times \mathcal{A} \mapsto \mathcal{S}$, which may be \emph{deterministic} or \emph{stochastic}.
The agent receives a reward at each time step defined by $\mathcal{R}: \mathcal{S} \times \mathcal{A} \mapsto \mathbb{R} $.
$\gamma \in [0,1]$ denotes the reward discounting factor.
The state, action, and reward at time $t \in \{0, 1, 2, \ldots\}$ are denoted by the \emph{random variables} $S_t$, $A_t$, and $R_t$.
The initial state of the environment comes from an initial state distribution $d_0$.
Thereafter, the agent follows a policy $\pi$, defined as a conditional distribution over actions given states, i.e., $\pi(a | s) = P(A_t = a | S_t = s)$.
The goal of the reinforcement learning agent is to find an optimal policy $\pi^*$ that maximizes the expected sum of discounted accumulated future rewards for a given environment, i.e., 
$\pi^* \in \arg\, \max_{\pi} \mathbb{E}[\sum_{t=0}^{\infty} \gamma^t R_t | \pi]$.
For any policy, we also define the state value function $v^{\pi}(s) = \mathbb{E}[\sum_{k=0}^{\infty} \gamma^k R_{t+k} | \pi, S_t = s]$ and the state-action value function $Q^{\pi} (s, a) = \mathbb{E}[\sum_{k=0}^{\infty} \gamma^k R_{t+k} | \pi, S_t = s, A_t = a]$.

\section{Related Work}
\label{sec:rel_work}

For the application of state embeddings in reinforcement learning, there are two dominant strands of research, namely \emph{world models} and \emph{state aggregation} using bisimulation metrics.
World model approaches train an environment model in a supervised fashion from experience collected in the environment, which is then used to generate compressed state representations \citep{ha_schmidhuber_2018_recurrent_world_models} or to train an agent using the learned world model \citep{ha_schmidhuber_world_models_2018,schmidhuber_2015_learning_to_think}.
Further applications of world models, e.g. for Atari 2000 domains, show that abstract state representations learned via world models can substantially improve sample efficiency \citep{kaiser_2019_model_based_atari,hafner_2020_mastering_atari_with_wrold_models}.
Recent work by \citet{tao2020novelty,ermolov2020latent} demonstrate that state embeddings learned via world models can also be used to guide exploration during training, either based on the error of transition predictions \citep{ermolov2020latent} or based on the distance of state embeddings in latent space \cite{tao2020novelty}.
Similar to this idea, \citet{munk_2016} pre-train an environment model and use it to provide state representations for an RL agent.
Furthermore, \citet{bruin_kober_2018} investigate additional learning objectives to learn state representations, and \citet{FrancoisLavet2019} propose the use of an environment model to generate abstract state representations; their learned state representations are then used by a Q-learning agent.
By using a learned model of the environment to generate abstract states, these approaches capture structure in the state space and reduce the dimensionality of its representation.
In contrast to world models, we consider both states and actions and train an RL agent on the original environment using its embedded representation rather than using a surrogate world model. 
Bisimulation, on the other hand, is a method for aggregating states that are ``behaviorally equivalent'' \citep{li_2006_state_abstraction} and can improve convergence speed by grouping similar states into abstract states.
Several works on bisimulation-based state aggregation, e.g. \citet{ferns_2004_metrics,givan_2003_equivalence_notion_mdp}, present different metrics to measure state similarity.
Furthermore, \citet{zhang_2020_learning_representations} and \citet{castro_2020_scalable_methods_for_state_similarity} propose deep learning methods for generating bisimulation-based state aggregations that scale beyond the tabular methods proposed in several earlier works \citep{castro_2020_scalable_methods_for_state_similarity}.
While there are parallels between bisimulation and our approach, we do not propose the aggregation of states.
Instead, our embedding technique projects states into a continuous state embedding space, similar to \citet{zhang_2020_learning_representations}, where their behavioral similarity is captured by their proximity in embedding space.
Furthermore, our method embeds both states and actions and does not employ an explicit similarity metric such as a bisimulation metric, but instead learns the relationships among different states and actions via an environment model.
State representations are also used in RL-based NLP tasks, such as \citet{narasimhan_NLP_2015}, who jointly train an LSTM-based state representation module and a DQN agent, and \citet{ammanabrolu_NLP_2019}, who propose the use of a knowledge graph based state representation.

In addition to state representations, previous work has explored the use of additional models to learn meaningful action representations. 
In particular, \citet{hasselt_2009} investigate the use of a continuous actor in a policy gradient based approach to solve discrete action MDPs, where the policy is learned in continuous space and actions are discretized before execution in the environment.
\citet{dulac_arnold_large_action_spaces_2015} propose a similar methodology, where a policy is learned in continuous action embedding space and then mapped to discrete actions in the original problem domain.
Both \citet{hasselt_2009} and \citet{dulac_arnold_large_action_spaces_2015} only consider actions and assume that embeddings are known a priori.
\citet{Tennenholtz2019} propose a methodology called Act2Vec, where they introduce an embedding model similar to the Skip-Gram model \citep{mikolov2013_skipgram} that is trained using data from expert demonstrations and then combined with a DQN agent.
One significant drawback of this approach is that information on the semantics of actions has to be injected via expert demonstration and is not learned automatically.
In contrast, \citet{Chandak2019, whitney2019dynamics} propose methods that enable the self-supervised learning of state and action representations from experience collected by the RL agent.
\citet{Chandak2019} use an embedding model that resembles CBOW.
\citet{Chandak2019} only consider action embeddings, while we jointly embed states and actions and improve the performance significantly.
\citet{whitney2019dynamics} on the other hand, present an approach for embedding both states and actions.
However, their approach generates embeddings for sequences of actions, resulting in a temporal abstraction.
They then train a policy on these temporally-abstracted actions.
In contrast to \citet{whitney2019dynamics}, we propose embedding individual states and actions and present theoretical results.
As an alternative to embedding methods, \citet{pazis_parr_2011} and \citet{sallans_hinton_2004} represent actions in binary format, where they learn a value function associated with each bit, and \citet{sharma_2017} use factored actions, where actions are expressed using underlying primitive actions.
Again, these three methods rely on handcrafted action decomposition.

Previous embedding approaches consider either states or actions in isolation or use embeddings for temporal abstraction.
By contrast, in addition to capturing structure in the state or action space respectively, we leverage interdependencies between the two by jointly learning embeddings for them, and improve the quality of embeddings accordingly.

\section{Methodology}
\label{sec:method}

In this section, we propose a model to jointly learn state and action embeddings and present a generic framework that uses these embeddings in conjunction with any policy gradient algorithm.

\subsection{Architecture}
\label{method:architecture}
Our proposed approach has three components: (i) state-action embedding model, (ii) policy gradient based RL agent that takes in the embedded states and outputs actions in action embedding space, and (iii) action mapping function that maps embedded actions to actions in the original problem domain.
\begin{figure*}[t]
    \centering
    \begin{subfigure}[b]{.44\textwidth}
        \centering
        \includegraphics[width=0.8\linewidth]{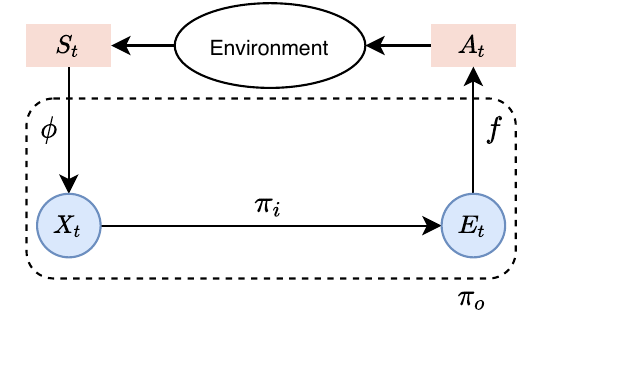}
        \vspace{-3em}
        \caption{}
        \label{fig:combined_model}
    \end{subfigure}%
    \begin{subfigure}[b]{.44\textwidth}
        \centering
        \includegraphics[width=0.8\linewidth]{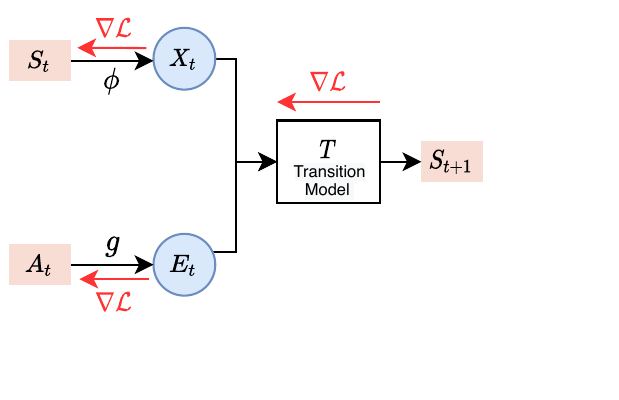}
        \vspace{-3em}
        \caption{}
        \label{fig:embed_model}
    \end{subfigure}%
    \vspace{-0.3em}
    \caption{(a) Overall architecture, using the state embedding function $\phi$, the internal policy $\pi_i$, and the action mapping function $f$.
        (b) Environment model for learning embedding function $\phi$ and the auxiliary action embedding function $g$ (the red arrows denote the gradient flow of the supervised loss in Equation~(\ref{eq:loss_embedding})).}
    \label{fig:model}
\end{figure*}
We define two embedding spaces, i.e., a continuous state embedding space $\mathcal{X} \in \mathbb{R}^m$ and a continuous action embedding space $\mathcal{E} \in \mathbb{R}^d$; states are projected to the state embedding space via a function $\phi: \mathcal{S} \mapsto \mathcal{X}$.
On the other hand, we define a function $f: \mathcal{E} \mapsto \mathcal{A}$ that maps points in the action embedding space to actions in the original problem domain.
With these embedding spaces, we further define an internal policy $\pi_i$ that takes in the embedded state and outputs the embedded action, i.e., $\pi_i: \mathcal{X} \mapsto \mathcal{E}$. Furthermore, the mapping functions $\phi$ and $f$, together with the internal policy $\pi_i$ form the overall policy $\pi_o: \mathcal{S} \mapsto \mathcal{A}$, i.e., $\pi_o$ is characterized by the following three components:
\begin{equation*}
    X_t  = \phi(S_t),
    E_t \sim \pi_i(\cdot | X_t), \text{and }
    A_t = f(E_t),
\end{equation*}
where random variables $S_t \in \mathcal{S}$, $X_t \in \mathcal{X}$, $E_t \in \mathcal{E}$, and $A_t \in \mathcal{A}$ denote the state in the original problem domain, state in the embedding domain, action in the embedding domain, and action in the original domain at time $t$, respectively.
The overall model architecture is illustrated in Figure~\ref{fig:combined_model}.
In Figure~\ref{fig:combined_model}, function $\phi$ is directly learned using an embedding model illustrated in Figure \ref{fig:embed_model} (detailed in Section \ref{method:embedding}).
However, function $f$ is not directly part of this embedding model, but uses the obtained action embeddings to learn a mapping from $e\in\mathcal{E}$ to $a\in\mathcal{A}$ (see Section~\ref{method:algorithm} for details).

For an RL problem, the ultimate goal is to optimize the policy $\pi$ in the original problem domain, i.e., $\pi$ in the $\mathcal{S} \times \mathcal{A}$ space.
For the proposed architecture, the idea is to first establish efficient embeddings for both states and actions in $\mathcal{S}$ and $\mathcal{A}$ via self-supervised learning.
Then we reduce the problem of finding the optimal $\pi^*$ in the original problem domain to the problem of internal policy $\pi_i$ optimization that acts purely in the embedding space. However, it is still unclear how the optimization of $\pi_i$ is related to $\pi^*$. In addition, the overall policy $\pi_o$ relying on $\phi$ and $f$ further complicates the policy learning.
In this regard, we establish the relationships among $\pi$, $\pi_o$, and $\pi_i$ in the next section, which is the foundation to justify the validity of leveraging both state and action embeddings in RL.

\subsection{Theoretical Foundations: Relationships among $\pi$, $\pi_o$, and $\pi_i$}
\label{method:PG}

To understand the relationships among $\pi$, $\pi_o$, and $\pi_i$, we first derive how $\pi_o$ is related to $\pi$, after which we prove the existence of $\pi_o$ that achieves optimality in the original problem domain. For this goal, two further assumptions are required on the nature of the state embedding function $\phi$ and the action mapping function $f$.
\begin{assumption}
    \label{as:act_mapping}
    Given an action embedding $E_t$, $A_t$ is deterministic and defined by a function $f: \mathcal{E} \mapsto \mathcal{A}$, i.e., there exists action $a$ such that $P(A_t = a | E_t = e) = 1$.
\end{assumption}
\begin{assumption}
    \label{as:state_mapping}
    Given a state $S_t$, $X_t$ is deterministic and defined by a function $\phi: \mathcal{S} \mapsto \mathcal{X}$, i.e., there exists state embedding $x$ such that $P(X_t = x | S_t = s) = 1$.
\end{assumption}
We validate Assumption \ref{as:act_mapping}, which defines $f$ as a \emph{many-to-one} mapping, empirically for all experiments conducted in Section \ref{sed:experiments}, and find that no two actions share exactly the same embedded representation, i.e., Assumption \ref{as:act_mapping} holds in practice.
Note that we also assume the Markov property for our environment, which is a standard assumption for reinforcement learning problems.
With slight abuse of notation, we denote the inverse mapping from an action $a \in \mathcal{A}$ to its corresponding points in the embedding space (\emph{one-to-many} mapping) by $f^{-1}(a) := \{e \in \mathcal{E}: f(e) = a\}$.
\begin{lemma}
    \label{lemma:v}
    Under Assumptions~\ref{as:act_mapping} and \ref{as:state_mapping}, for policy $\pi$ in the original problem domain, there exists $\pi_i$ such that
    \begin{align*}
        v^\pi (s) & = \sum_{a \in A} \int_{\{e\} = f^{-1}(a)} \pi_i(e | x=\phi(s)) Q^{\pi}(s, a) \,de \,.
    \end{align*}
\end{lemma}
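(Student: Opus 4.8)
The plan is to reduce the claimed identity to a single fact: that one can build an internal policy $\pi_i$ on $\mathcal{E}$ whose pushforward through $f$ reproduces $\pi$. I would start from the elementary Bellman consistency identity
\begin{equation*}
  v^\pi(s) = \sum_{a\in\mathcal{A}} \pi(a\mid s)\, Q^\pi(s,a),
\end{equation*}
which holds by the definitions of $v^\pi$ and $Q^\pi$ together with the Markov property. Since $Q^\pi(s,a)$ does not depend on the action embedding $e$, if we can exhibit $\pi_i$ satisfying $\int_{f^{-1}(a)} \pi_i(e\mid x{=}\phi(s))\,de = \pi(a\mid s)$ for every $a$, then pulling the $e$-independent factor $Q^\pi(s,a)$ inside each integral and summing gives exactly the right-hand side of the lemma. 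So the entire statement reduces to constructing such a $\pi_i$.

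To construct it, I would use the structure supplied by Assumptions~\ref{as:act_mapping} and \ref{as:state_mapping}. By Assumption~\ref{as:act_mapping}, $f$ is a genuine (deterministic, many-to-one) function, so the fibers $\{f^{-1}(a)\}_{a\in\mathcal{A}}$ are pairwise disjoint and cover the image of $f$; one may assume (as the architecture intends, and without loss for an existence claim) that $f$ is onto the support of $\pi(\cdot\mid s)$. By Assumption~\ref{as:state_mapping}, $\phi$ is a deterministic function, so $x=\phi(s)$ is well defined and conditioning the internal policy on $x$ is meaningful. For each $x$ in the image of $\phi$ fix a state $s_x$ with $\phi(s_x)=x$, and for each action $a$ pick an arbitrary probability density $\rho_a$ supported on $f^{-1}(a)$ (Lebesgue-uniform on $f^{-1}(a)$ when that set has positive measure, and an appropriate normalized reference measure otherwise). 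Define $\pi_i(e\mid x) := \pi\big(f(e)\mid s_x\big)\,\rho_{f(e)}(e)$. This is a valid conditional density on $\mathcal{E}$, and integrating over a single fiber gives $\int_{f^{-1}(a)}\pi_i(e\mid x)\,de = \pi(a\mid s_x)\int_{f^{-1}(a)}\rho_a(e)\,de = \pi(a\mid s_x)$.

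Finally I would evaluate at $x=\phi(s)$, so that $s_x=s$, and substitute:
\begin{align*}
  \sum_{a\in\mathcal{A}}\int_{f^{-1}(a)} \pi_i(e\mid \phi(s))\,Q^\pi(s,a)\,de
  &= \sum_{a\in\mathcal{A}} Q^\pi(s,a)\int_{f^{-1}(a)}\pi_i(e\mid \phi(s))\,de \\
  &= \sum_{a\in\mathcal{A}} Q^\pi(s,a)\,\pi(a\mid s) \;=\; v^\pi(s),
\end{align*}
which is the claim.

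The part that needs the most care --- and essentially the only real obstacle --- is the measure-theoretic legitimacy of the construction of $\pi_i$: one must check that the fibers $f^{-1}(a)$ are measurable and that each carries a reference probability measure so that $\rho_a$ exists, and handle degenerate fibers (e.g.\ singletons, where the ``$de$'' integral collapses to a point mass) without abusing the notation. A secondary subtlety is that expressing $\pi_i$ as a function of $x=\phi(s)$ implicitly requires $\phi$ to be injective (or, more weakly, that $\pi(\cdot\mid s)$ depends on $s$ only through $\phi(s)$); otherwise the choice of $s_x$ is ambiguous and one recovers the identity only for a representative of each $\phi$-fiber. Modulo these points, the argument is the one-line substitution above.
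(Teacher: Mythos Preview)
Your proof is correct and takes a genuinely different, more elementary route than the paper's. The paper's argument is considerably longer: it starts from the Bellman equation, introduces the embedded state $x$ and then the embedded action $e$ via the law of total probability, and invokes five auxiliary conditional-independence claims (Claims~\ref{claim:a_s_cond_indep_x}--\ref{claim:drop_e_x_for_a}, derived from the model's generative structure $S_t\to X_t\to E_t\to A_t$) to strip $s$ and $a$ out of the conditionals one by one, finally identifying the residual $P(e\mid\phi(s))$ with $\pi_i$. You instead recognize that the lemma is purely an existence statement and build $\pi_i$ by hand so that its pushforward through $f$ equals $\pi(\cdot\mid s)$; the Bellman consistency identity $v^\pi(s)=\sum_a\pi(a\mid s)Q^\pi(s,a)$ then finishes the job in one line. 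Your approach is shorter and makes the existential content of the lemma transparent; the paper's approach is more laborious but shows how the identity emerges from the architecture's conditional-independence structure, which is arguably more illuminating for the downstream use in Theorem~\ref{theorem:pi} and Lemma~\ref{lemma:PG}.

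The injectivity issue you flag is real and is also present, though hidden, in the paper: Claim~\ref{claim:a_s_cond_indep_x} (that $A_t\perp S_t\mid X_t$) is justified there by ``the definition of our embedding model'', which amounts to assuming that $\pi$ already factors through $\phi$. Without that --- or without Assumption~\ref{as:unique_s_to_x}, which the paper reserves for Lemma~\ref{lemma:PG} --- the identity can genuinely fail for arbitrary $\pi$, exactly as your caveat anticipates. So your ``secondary subtlety'' is not a defect of your argument relative to the paper's; it is a defect of the lemma as stated that both proofs inherit.
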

\begin{proofsketch}
    Based on the Bellman equation for the value function $v^{\pi}$ in the original domain, we introduce the embedded state $x=\phi(s)$ using Assumption \ref{as:state_mapping}.
    By the law of total probability and Assumption \ref{as:act_mapping}, we then introduce the embedded action $e$.
    From the Markovian property and the definition of our model, we can derive Claims \ref{claim:a_s_cond_indep_x} - \ref{claim:drop_e_x_for_a} on (conditional) independence in Appendix \ref{appendix:claims}, which then allow us to remove the original state and action from the expression of the policy, leaving us with $\pi_i$.
    See Appendix \ref{appendix:prrof_L1} for the complete proof.
\end{proofsketch}

By Assumptions \ref{as:act_mapping} and \ref{as:state_mapping}, Lemma \ref{lemma:v} allows us to express the overall policy $\pi_o$ in terms of the internal policy $\pi_i$ as
\begin{align}
    \pi_o(a | s) & = \int_{\{e\} = f^{-1}(a)} \pi_i(e | x=\phi(s)) \,de \,,
\end{align}
bridging the gap between the original problem domain and the policy in embedding spaces $\mathcal{X}$ and $\mathcal{E}$.
Under $\pi^*$ in the original domain, we define $v^*:=v^{\pi^*}$ and $Q^*:=Q^{\pi^*}$ for the ease of discussions.
Then using Lemma \ref{lemma:v}, we now prove the existence of an overall policy $\pi_o$ that is optimal.
\begin{theorem}
    \label{theorem:pi}
    Under Assumptions \ref{as:act_mapping} and \ref{as:state_mapping}, there exists an overall policy $\pi_o$ that is optimal, such that $v^{\pi_o} = v^*$.
\end{theorem}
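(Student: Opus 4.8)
The plan is to instantiate Lemma~\ref{lemma:v} at the optimal policy $\pi^*$ and then argue that the overall policy it induces is necessarily greedy with respect to $Q^*$, from which optimality follows by standard MDP facts.

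Concretely, I would first apply Lemma~\ref{lemma:v} with $\pi = \pi^*$. This yields an internal policy $\pi_i$ such that, setting $\pi_o(a \mid s) := \int_{\{e\}=f^{-1}(a)} \pi_i(e \mid x=\phi(s))\,de$ (the relation derived immediately after Lemma~\ref{lemma:v}), we have $v^*(s) = \sum_{a \in \mathcal{A}} \pi_o(a \mid s)\, Q^*(s,a)$ for all $s \in \mathcal{S}$. Here $\pi_o$ is a genuine conditional distribution over $\mathcal{A}$: by Assumption~\ref{as:act_mapping} the sets $\{f^{-1}(a)\}_{a \in \mathcal{A}}$ partition $\mathcal{E}$, so $\sum_a \pi_o(a \mid s) = \int_{\mathcal{E}} \pi_i(e \mid \phi(s))\,de = 1$.

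Next I would combine this with the Bellman optimality equation $v^*(s) = \max_a Q^*(s,a)$. Since $Q^*(s,a) \le v^*(s)$ for every $a$, we get $v^*(s) = \sum_a \pi_o(a \mid s) Q^*(s,a) \le \sum_a \pi_o(a \mid s) v^*(s) = v^*(s)$, which forces $\pi_o(a \mid s)\bigl(v^*(s) - Q^*(s,a)\bigr) = 0$ for every $a$; that is, $\pi_o(\cdot \mid s)$ is supported on $\arg\max_a Q^*(s,a)$. Expanding $Q^*(s,a) = \mathcal{R}(s,a) + \gamma\, \mathbb{E}[v^*(S_{t+1}) \mid S_t = s, A_t = a]$ and substituting back shows that $v^*$ satisfies the policy-evaluation (Bellman) equation of $\pi_o$; since that equation is a $\gamma$-contraction with a unique fixed point $v^{\pi_o}$, we conclude $v^{\pi_o} = v^*$, so $\pi_o$ is optimal. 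Equivalently, one may simply quote the standard fact that any policy greedy with respect to $Q^*$ is optimal.

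The step I expect to be the main obstacle is the first one: Lemma~\ref{lemma:v} only guarantees a compatible $\pi_i$ for a \emph{given} policy $\pi$, and invoking it at $\pi^*$ implicitly needs the fixed maps $\phi$ and $f$ to be expressive enough to represent some optimal policy --- roughly, $f$ must reach an optimal action in each state and $\phi$ must not merge states whose optimal actions differ. I would therefore double-check that the construction underlying Lemma~\ref{lemma:v} (the conditional-independence Claims in Appendix~\ref{appendix:claims}) indeed produces a valid $\pi_i$ for $\pi^*$, or else record the mild richness condition on $(\phi, f)$ under which it does; the remainder is routine manipulation of the Bellman equations, with the usual caveat that the $\gamma = 1$ case additionally needs a properness/absorbing-state assumption for the fixed point to be unique.
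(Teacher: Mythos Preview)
Your proposal is correct and begins exactly as the paper does: instantiate Lemma~\ref{lemma:v} at $\pi^*$ to obtain
\[
v^*(s)=\sum_{a\in\mathcal{A}}\int_{f^{-1}(a)}\pi_i(e\mid\phi(s))\,Q^*(s,a)\,de.
\]
The paper's own proof stops here and simply asserts that this displays an overall policy $\pi_o$ with $v^{\pi_o}=v^*$; it does not spell out why the displayed identity yields optimality of $\pi_o$ rather than merely a rewriting of $v^{\pi^*}$. Your greedy argument (combining the identity with $v^*(s)=\max_a Q^*(s,a)$ to force the support of $\pi_o(\cdot\mid s)$ into $\arg\max_a Q^*(s,a)$, then invoking the Bellman fixed point) is the natural way to close that gap and is a strict addition to the paper's reasoning rather than a different route.

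Your caveat about expressiveness is well taken, and the paper does not explicitly discharge it either: its proof simply writes ``$\exists\,\phi,f,\pi_i$'', effectively treating $\phi$ and $f$ as part of what is being existentially chosen rather than as fixed maps. Under that reading there is no obstruction (one can take $\phi$ injective and $f$ surjective, whereupon the $\pi_i$ produced by Lemma~\ref{lemma:v} induces $\pi_o=\pi^*$ outright). If instead $\phi$ and $f$ are regarded as fixed and only required to satisfy Assumptions~\ref{as:act_mapping}--\ref{as:state_mapping}, then your concern is genuine: injectivity of $\phi$ (Assumption~\ref{as:unique_s_to_x}, used only later for Lemma~\ref{lemma:PG}) and surjectivity of $f$ are exactly the missing ``richness'' conditions, and neither the paper nor your argument can dispense with them in that setting.
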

\begin{proof}
    Under finite state and action sets, bounded rewards, and $\gamma \in [0, 1)$, at least one optimal policy $\pi^*$ exists.
    From Lemma \ref{lemma:v}, we then have
    \begin{align}
        \label{eq:optimal_policy_pi_o}
        v^* (s) & = \sum_{a \in A} \int_{\{e\} = f^{-1}(a)} \pi_i(e | \phi(s)) Q^*(s, a) \,de \,.
    \end{align}
    Thus, $\exists$ $\phi$, $f$, and $\pi_i$, representing an overall policy $\pi_o$, which is optimal, i.e., $v^{\pi_o} = v^{*}$.
\end{proof}


Theorem~\ref{theorem:pi} suggests that in order to get the optimal policy $\pi^*$ in the original domain, we can focus on the optimization of the overall policy $\pi_o$, which is discussed in the next section.

\subsection{Architecture Embodiment and Training}
In this section, we first present the implementation and the training of the state-action embedding model (illustrated in Figure~\ref{fig:embed_model}). Based on this method and according to Theorem~\ref{theorem:pi}, we then propose a strategy to train the overall policy $\pi_o$, where functions $\phi$ and $f$ are iteratively updated.

\subsubsection{Joint Training of the State-Action Embedding}
\label{method:embedding}
\begin{figure*}[htbp]
    \centering
    \begin{subfigure}[t]{.22\textwidth}
        \centering
        \includegraphics[width=\linewidth]{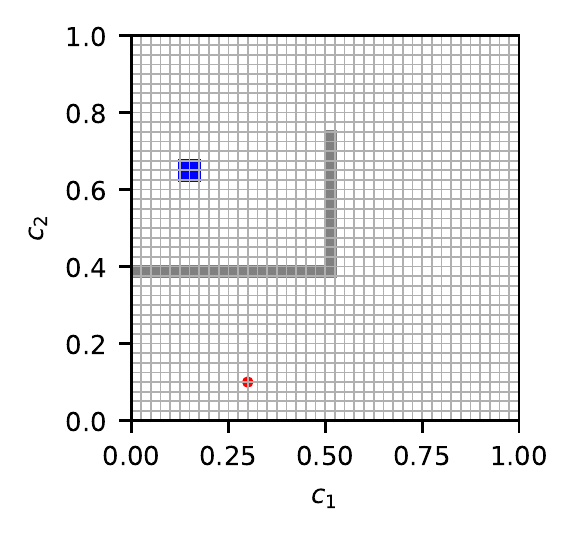}
        \vspace{-1.9em}
        \caption{Discrete gridworld}
        \label{fig:states_grid}
    \end{subfigure}\hspace*{\fill}
    \begin{subfigure}[t]{.22\textwidth}
        \centering
        \includegraphics[width=\linewidth]{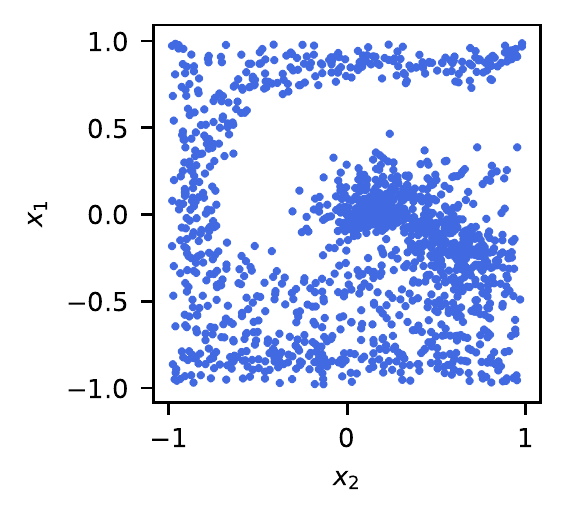}
        \vspace{-1.9em}
        \caption{State embeddings}
        \label{fig:state_embeddings}
    \end{subfigure}\hspace*{\fill}
    \begin{subfigure}[t]{.22\textwidth}
        \centering
        \includegraphics[width=\linewidth]{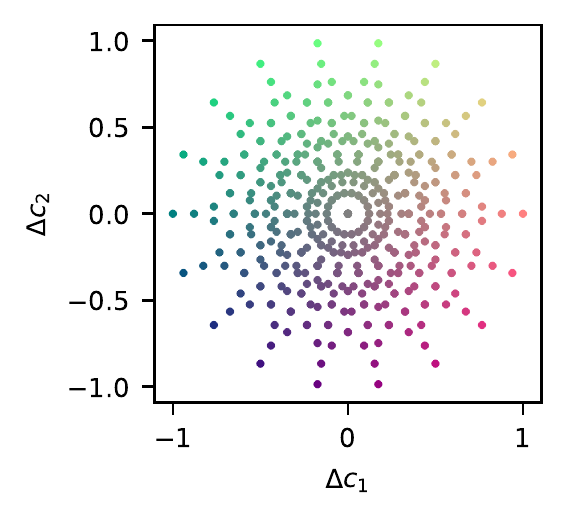}
        \vspace{-1.9em}
        \caption{Action movements}
        \label{fig:action_movements}
    \end{subfigure}\hspace*{\fill}
    \begin{subfigure}[t]{.22\textwidth}
        \centering
        \includegraphics[width=\linewidth]{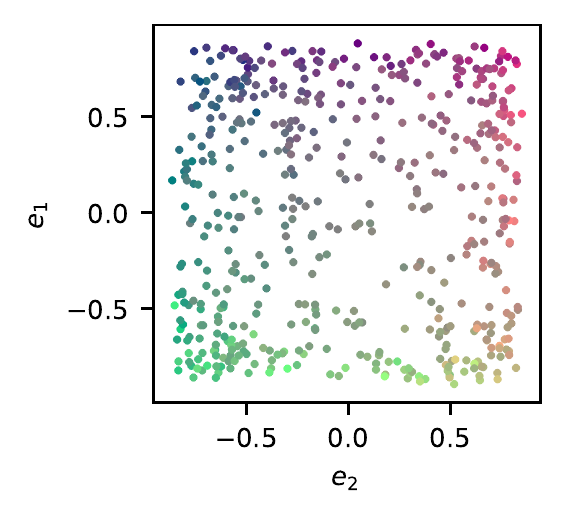}
        \vspace{-1.9em}
        \caption{Action embeddings}
        \label{fig:action_embeddings}
    \end{subfigure}\hspace*{\fill}
    \vspace{-0.3em}
    \caption{State and action embeddings for a discrete state gridworld with $1,600$ states and $2^9$ actions (9 actuators).
        The embedding dimension for both states and actions is set to $2$ for visualization purposes.
        (a) Original environment, i.e., a discrete gridworld as described in Section \ref{exp:poc}.
        The blue area is the goal state and the red dot is the starting point of the agent.
        (b) Learned state embeddings in continuous state embedding space.
        (c) Displacement in Cartesian co-ordinates ($c_1$ and $c_2$) caused by actions.
        Each action is colored according to this displacement with $[R = \Delta c_1, G = \Delta c_2 , B = 0.5]$.
        (d) Learned action embeddings in continuous action embedding space. Both state and action embeddings clearly capture the structure of the environment.}
    \label{fig:embedding_vis}
    \vspace{-0.3em}
\end{figure*}
In this section, we elaborate on our proposed embedding model for jointly learning state and action embeddings.
Specifically, we require two components: (i) function $\phi$ that projects $S_t$ into embedding space $\mathcal{X}$ and (ii) function $f$ that maps each point in embedding space $\mathcal{E}$ to an action $A_t$ in the original problem domain.
Since these functions are not known in advance, we train estimators using a model of the environment (see Figure~\ref{fig:embed_model}).
This environment model requires two further components: (iii) function $g: \mathcal{A} \mapsto \mathcal{E}$ and (iv) transition model $T: \mathcal{E} \times \mathcal{X} \mapsto \mathcal{S}$ that predicts the next state from the concatenated embeddings of the current state and action.
Note that $g$ is a \emph{one-to-one} mapping, since by Assumption~\ref{as:act_mapping}, no two actions have exactly the same embedding.
Denote the estimators of components (i)-(iv) by $\hat{\phi} \,, \hat{f} \,, \hat{g} \,, \hat{T}$; the target of the environment model is
\begin{align}
    \begin{aligned}
        \hat{P}(S_{t+1} | S_t, A_t) = \hat{T}(S_{t+1} | X_t, E_t) \hat{g}(E_t | A_t) \hat{\phi}(X_t | S_t) \,.
    \end{aligned}
\end{align}
The difference between the true transition probabilities $P(S_{t+1} | S_t, A_t)$ and the estimated probabilities $\hat{P}(S_{t+1} | S_t, A_t)$ can be measured using the Kullback-Leibler (KL) divergence, where the expectation is over the true distribution $P(S_{t+1} | S_t, A_t)$, i.e.,
\begin{align}
    \begin{aligned}
         D_{KL} (P || \hat{P}) = - \mathbb{E}_{S_{t+1} \sim P(S_{t+1} | S_t, A_t)} \Big [\ln \Big ( \frac{\hat{P}(S_{t+1} | S_t, A_t)}{P(S_{t+1} | S_t, A_t)} \Big ) \Big ] \,.
    \end{aligned}
    \label{eq:DKL}
\end{align}
From tuples $(S_{t+1}, S_t, A_t)$, we can compute a sample estimate of this using Equation~(\ref{eq:DKL}).
In Equation~(\ref{eq:DKL}), the denominator does not depend on $\hat{\phi}$, $\hat{g}$, or $\hat{T}$. Therefore, we define the loss function for the embedding model as
\begin{align}
    \begin{aligned}
         \mathcal{L}(\hat{\phi}, \hat{g}, \hat{T}) = - \mathbb{E}_{S_{t+1} \sim P(S_{t+1} | S_t, A_t)} \Big [\ln (\hat{P}(S_{t+1} | S_t, A_t) ) \Big ] \,.
    \end{aligned}
    \label{eq:loss_embedding}
\end{align}
Note that the estimator $\hat{f}$ is not directly included in the embedding model.
Instead, we find $\hat{f}$ by minimizing the error between the original action and the action reconstructed from the embedding.
We can define this as
\begin{align}
    \label{eq:loss_f}
    \mathcal{L}(\hat{f}) = - \ln (\hat{f}(A_t | E_t) ) \,.
\end{align}
Then, all components of our embedding model can be learned by minimizing the loss functions $\mathcal{L}(\hat{\phi}, \hat{g}, \hat{T})$ and $\mathcal{L}(\hat{f})$.
Note that the embedding model is trained in two steps,
firstly by minimizing the loss in Equation (\ref{eq:loss_embedding}) to update $\hat{\phi}$, $\hat{g}$, and $\hat{T}$ and secondly by minimizing the loss in Equation (\ref{eq:loss_f}) to update $\hat{f}$.
Here, we give the target of the embedding model for the case of discrete state and action spaces.
However, the model is equally applicable to continuous domains.
In this case, the loss functions $\mathcal{L}(\hat{\phi}, \hat{g}, \hat{T})$ and $\mathcal{L}(\hat{f})$ are replaced by loss functions suitable for continuous domains.
For continuous domains in Section~\ref{sed:experiments}, we adopt a mean squared error loss instead of the losses given in Equations~(\ref{eq:loss_embedding}-\ref{eq:loss_f}), but other loss functions may also be applicable.
For experiments, we parameterize all components of the embedding model illustrated in Figure \ref{fig:embed_model} as neural networks.
More details can be found in Appendix \ref{parameters:model_imp}.

\subsubsection{State-Action Embedding and Policy Learning}
\label{method:algorithm}

Theorem~\ref{theorem:pi} only shows that optimizing $\pi_o$ can help us achieve optimality in the original domain. Then a natural question is whether we can optimize $\pi_o$ by directly optimizing the internal policy $\pi_i$ using the state/action embeddings in Section~\ref{method:embedding}. To answer this question, we first mathematically derive that updating $\pi_i$ is equivalent to updating $\pi_o$. 
We then proceed to present an iterative algorithm for learning the state-action embeddings and the internal policy.


Suppose $\pi_i$ is parameterized by $\theta$. Then with the objective of optimizing $\pi_o$ by only updating $\pi_i$,
we define the performance function of $\pi_o$ as\footnote{Equation (\ref{eq:perf_overall}) is for the overall policy with discrete states and actions. Note that our approach is also applicable in continuous domains. $\sum_{s \in \mathcal{S}}$ and $\sum_{a \in \mathcal{A}}$ would then be replaced by $\int_s$ and $\int_a$, respectively.\looseness=-1}
\begin{align}
    \label{eq:perf_overall}
    J_o(\phi, \theta, f) = \sum_{s \in \mathcal{S}} d_0(s) \sum_{a \in \mathcal{A}} \pi_o(a | s) Q^{\pi_o}(s, a) \,.
\end{align}
Let the state-action-value function for the internal policy be $Q^{\pi_i} (x, e) = \mathbb{E}[\sum_{k=0}^{\infty} \gamma^k R_{t+k} | \pi_i, X_t = x, E_t = e]$.
We can then define the performance function of the internal policy as
\begin{align}
    J_i(\theta) = \int_{x\in\mathcal{X}} d_0(x) \int_{e\in\mathcal{E}} \pi_i(e | x) Q^{\pi_i}(x, e) \, de \, dx \,.
\end{align}
The parameters of the overall policy $\pi_o$ can then be learned by updating its parameters $\theta$ in the direction of $\partial J_o(\phi, \theta, f) / \partial \theta$, while the parameters of the internal policy can be learned by updating $\theta$ in the direction of $\partial J_i(\theta) / \partial \theta$.
With the following assumption, we then have Lemma \ref{lemma:PG}.
\begin{assumption}
    \label{as:unique_s_to_x}
    The state embedding function $\phi(s)$ maps each state $s$ to a unique state embedding $x = \phi(s)$, i.e.,
    $
        \forall s_i \neq s_j \,, P(\phi(s_i) = \phi(s_j)) = 0
    $.
\end{assumption}
Note that Assumption \ref{as:unique_s_to_x}, defining $\phi$ as a \emph{one-to-one} mapping, is not theoretically guaranteed by the definition of our embedding model.
Nevertheless, we test this empirically in Section~\ref{sed:experiments} and find that no two states share the same embedded representation in any of our experiments, thus justifying Assumption \ref{as:unique_s_to_x} in practical scenarios considered in this work.
\begin{lemma}
    \label{lemma:PG}
    Under Assumptions \ref{as:act_mapping}--\ref{as:unique_s_to_x}, for all deterministic functions $f$ and $\phi$, which map each point $e \in \mathcal{E}$ and $s \in \mathcal{S}$ to an action $a \in \mathcal{A}$ and to an embedded state $x \in \mathcal{X}$, and the internal policy $\pi_i$ parameterized by $\theta$, the gradient of the internal policy's performance function $\frac{\partial J_i(\theta)}{\partial \theta}$ equals the gradient of the overall policy's performance function $\frac{\partial J_o(\phi, \theta, f)}{\partial \theta}$, i.e.,
    \begin{align*}
        \frac{\partial J_i(\theta)}{\partial \theta} & = \frac{\partial J_o(\phi, \theta, f)}{\partial \theta}  \,.
    \end{align*}
\end{lemma}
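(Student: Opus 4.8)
The plan is to prove something slightly stronger than the stated gradient identity, namely that the two performance functions coincide as functions of $\theta$, i.e. $J_i(\theta) = J_o(\phi, \theta, f)$ for every $\theta$ (with $\phi$, $f$ held fixed), after which differentiating both sides in $\theta$ yields the lemma at once. Three ingredients are needed: (i) the bridging identity $\pi_o(a \mid s) = \int_{\{e\}=f^{-1}(a)} \pi_i(e \mid \phi(s))\, de$ already obtained from Lemma~\ref{lemma:v}; (ii) the fact that, by Assumption~\ref{as:unique_s_to_x}, $\phi$ is injective, so the initial embedded-state distribution $d_0$ on $\mathcal{X}$ is the pushforward of $d_0$ on $\mathcal{S}$, giving $\int_{x\in\mathcal{X}} d_0(x)\, h(x)\, dx = \sum_{s\in\mathcal{S}} d_0(s)\, h(\phi(s))$ for any integrable $h$; and (iii) the value-correspondence $Q^{\pi_i}(\phi(s), e) = Q^{\pi_o}(s, f(e))$ for all $s$ and $e$.

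The core step is (iii). I would fix $s$ and $e$, set $x=\phi(s)$ and $a=f(e)$, and argue that rolling out $\pi_i$ from $(x,e)$ in the embedding space induces the same distribution over discounted reward sequences as rolling out $\pi_o$ from $(s,a)$ in the original domain: the immediate reward depends only on $(s,a)$, which are fixed by $(x,e)$ through the deterministic maps $\phi$ and $f$ of Assumptions~\ref{as:act_mapping} and~\ref{as:state_mapping}; the next embedded state is $\phi(S_{t+1})$ with $S_{t+1}\sim\mathcal{T}(\cdot\mid s,a)$; and thereafter $\pi_i$ acts on $\phi(S_{t+1})$ exactly as $\pi_o$ acts on $S_{t+1}$. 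By the Markov property this relabeling propagates along the entire trajectory, so the two $Q$-values agree; equivalently, one checks both satisfy the same Bellman fixed-point equation and invokes uniqueness of the fixed point (finite state/action sets, bounded rewards, $\gamma\in[0,1)$). A byproduct is that $Q^{\pi_i}(x,e)$ depends on $e$ only through $a=f(e)$, so $Q^{\pi_i}$ is constant on each fiber $f^{-1}(a)$, which is what permits pulling it out of the integral in the next step.

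Then I would carry out the substitution. Starting from $J_i(\theta) = \int_x d_0(x) \int_e \pi_i(e\mid x) Q^{\pi_i}(x,e)\, de\, dx$, apply (ii) to rewrite $\int_x d_0(x)(\cdot)\,dx$ as $\sum_s d_0(s)(\cdot)\big|_{x=\phi(s)}$; partition the inner integral as $\int_{\mathcal{E}}(\cdot)\,de = \sum_{a\in\mathcal{A}} \int_{\{e\}=f^{-1}(a)} (\cdot)\,de$ (the fibers tile $\mathcal{E}$ since $f$ is a well-defined function); use (iii) to replace $Q^{\pi_i}(\phi(s),e)$ by the $e$-independent value $Q^{\pi_o}(s,a)$ and factor it out of the fiber integral; and recognize the remaining integral as $\int_{\{e\}=f^{-1}(a)}\pi_i(e\mid\phi(s))\,de = \pi_o(a\mid s)$ by (i). This gives $J_i(\theta) = \sum_s d_0(s) \sum_a \pi_o(a\mid s) Q^{\pi_o}(s,a) = J_o(\phi,\theta,f)$. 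Since $\phi$ and $f$ do not depend on $\theta$, taking $\partial/\partial\theta$ of both sides of this scalar identity yields $\partial J_i(\theta)/\partial\theta = \partial J_o(\phi,\theta,f)/\partial\theta$.

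The step I expect to be the main obstacle is making (iii) fully rigorous: it needs the well-definedness of $\phi^{-1}$ on the image of $\phi$ (hence Assumption~\ref{as:unique_s_to_x}, which also guarantees the embedded process $\{X_t\}$ is itself a genuine, Markov embedded MDP), a clean statement that this embedded MDP is a faithful relabeling of the original one, and the observation that $Q^{\pi_i}$ is constant on the fibers of the many-to-one map $f$. An alternative route is to prove the gradient identity directly by applying the policy gradient theorem to each side; that sidesteps comparing the performance functions, but then one must instead match the discounted state-visitation measures $d^{\pi_o}$ on $\mathcal{S}$ and $d^{\pi_i}$ on $\mathcal{X}$, which again reduces to the same relabeling argument, so I would prefer the performance-function route above.
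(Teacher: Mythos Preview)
Your proposal is correct and follows essentially the same route as the paper: both hinge on the $Q$-value correspondence $Q^{\pi_i}(\phi(s),e)=Q^{\pi_o}(s,f(e))$ (the paper obtains it via an intermediate policy $\pi_i'$ and Assumption~\ref{as:unique_s_to_x}, you via trajectory relabeling), the pushforward identity $d_0(s)=d_0(\phi(s))$, and the fiber decomposition $\int_{\mathcal{E}}=\sum_{a}\int_{f^{-1}(a)}$. The only cosmetic difference is that you first establish the stronger scalar identity $J_i(\theta)=J_o(\phi,\theta,f)$ and then differentiate, whereas the paper carries the operator $\partial/\partial\theta$ throughout the same chain of substitutions.
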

\begin{proofsketch}
    Assumptions~\ref{as:act_mapping} and \ref{as:unique_s_to_x} allow us to show the equivalence between the internal state-action-value function and the overall state-action-value function. Using this equivalence, we can then remove $\phi$ and $f$ from $\partial J_o(\phi, \theta, f)/ \partial \theta$ and are left with the gradient of the internal policy $\pi_i$ only.
    The complete proof is deferred to Appendix \ref{appendix:prrof_L2}.
\end{proofsketch}

Lemma \ref{lemma:PG} shows that the updates to the internal policy $\pi_i$ and the overall policy $\pi_o$ are equivalent.
This allows us to optimize the overall policy by making updates directly to the internal policy $\pi_i$, thereby avoiding the potentially intractable computation of the inverse functions $f^{-1}$ and $\phi^{-1}$.
Since there are no special restrictions on the internal policy $\pi_i$, we can use any policy gradient algorithm designed for continuous control to optimize the policy.
In addition, we can also iteratively update parameters for jointly learning $\phi$, $f$, and $\pi_i$, as shown in Algorithm \ref{algo:training}.
\begin{algorithm}[htb]
    \caption{Joint Training of State-Action Embeddings and the Internal Policy}
    \label{algo:training}
    \begin{algorithmic}
        \STATE Initialize the state and action embeddings (optional pre-training)\;
        \FOR{Epoch = 0, 1, 2, \ldots}
        \FOR{t = 1, 2, 3, \ldots}
        \STATE Generate state embedding $X_t = \phi(S_t)$ \;
        \STATE Sample action embedding $E_t \sim \pi_i(\cdot | X_t)$\;
        \STATE Map embedded action to $A_t = f(E_t)$ \;
        \STATE Execute $A_t$ in the environment to observe $S_t, R_t$\;
        \STATE Update the $\pi_i$ and the critic using some policy gradient algorithms\;
        \STATE Update $\phi$, $g$, $T$, and $f$ by minimizing the losses in Equations~(\ref{eq:loss_embedding}) and (\ref{eq:loss_f})\;
        \ENDFOR
        \ENDFOR

    \end{algorithmic}
\end{algorithm}

\section{Empirical Evaluation}
\label{sed:experiments}
We evaluate our proposed architecture, denoted by JSAE (Jointly-trained State Action Embedding), on game-based applications, robotic control, and a real-world recommender system, covering different combinations of discrete and continuous state and action spaces.
Our methodology is evaluated using Vanilla Policy Gradient (VPG) \citep{SpinningUp2018}, Proximal Policy Optimization (PPO) \citep{schulman2017PPO}, and Soft Actor Critic (SAC) \citep{tuomas_SAC_2018} algorithms.
We benchmark the performance against these algorithms without embeddings and with action embeddings generated by the method called policy gradients with Representations for Actions (RA) proposed by \citet{Chandak2019}.
To isolate the effect of dimensionality reduction, we additionally benchmark against the aforementioned algorithms with embeddings generated from two auto-encoders (AE)~--~one for states and one for actions.
The used auto-encoders consist of a simple two-layer feed-forward network, where the weights of the first layer after training are used as embeddings.
We pre-train the embedding models on randomly collected samples for all experiments and enable continuous updates for the Ant-v2 and recommender system environments.
\begin{figure*}[htbp]
    \centering
    \begin{subfigure}[t]{.34\textwidth}
        \centering
        \includegraphics[width=\linewidth]{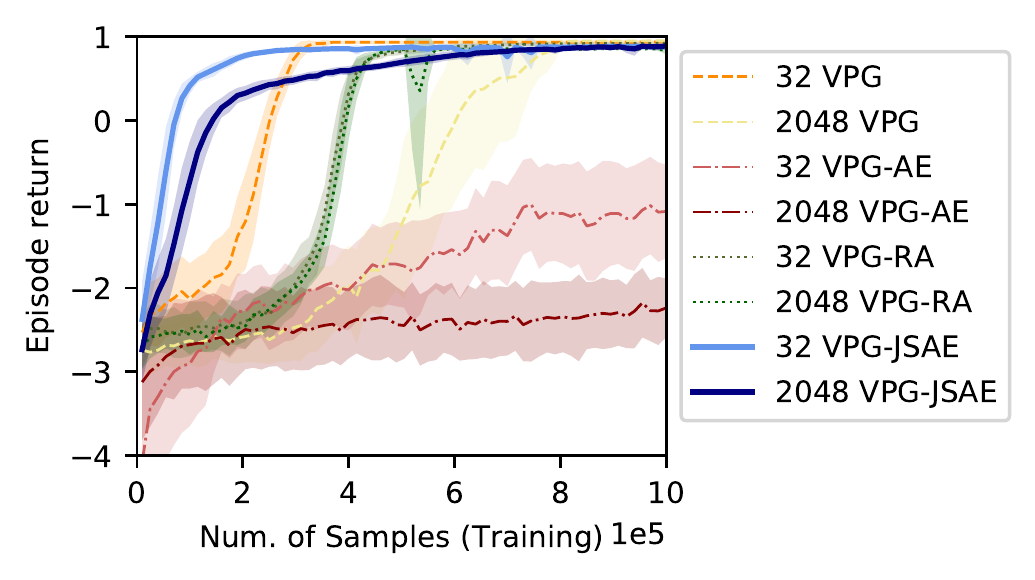}
        \vspace{-1.7em}
        \caption{Gridworld with discrete states and actions for different numbers of actions}
        \label{fig:res_grid_disc}
    \end{subfigure}\hspace*{\fill}
    \begin{subfigure}[t]{.34\textwidth}
        \centering
        \includegraphics[width=\linewidth]{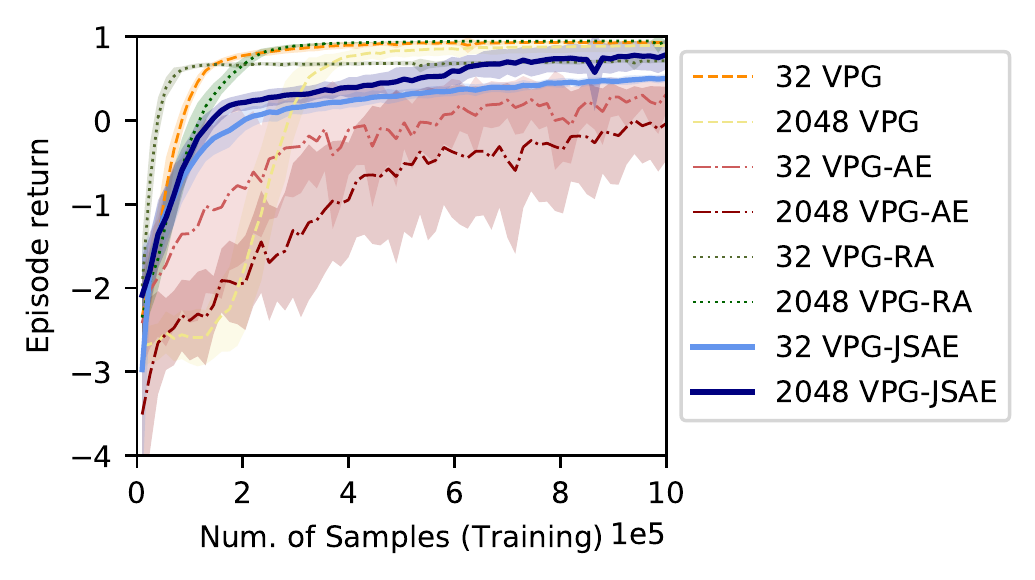}
        \vspace{-1.7em}
        \caption{Gridworld with continuous states and discrete actions for different numbers of actions}
        \label{fig:res_grid_cont}
    \end{subfigure}\hspace*{\fill}
    \begin{subfigure}[t]{.32\textwidth}
        \centering
        \includegraphics[width=\linewidth]{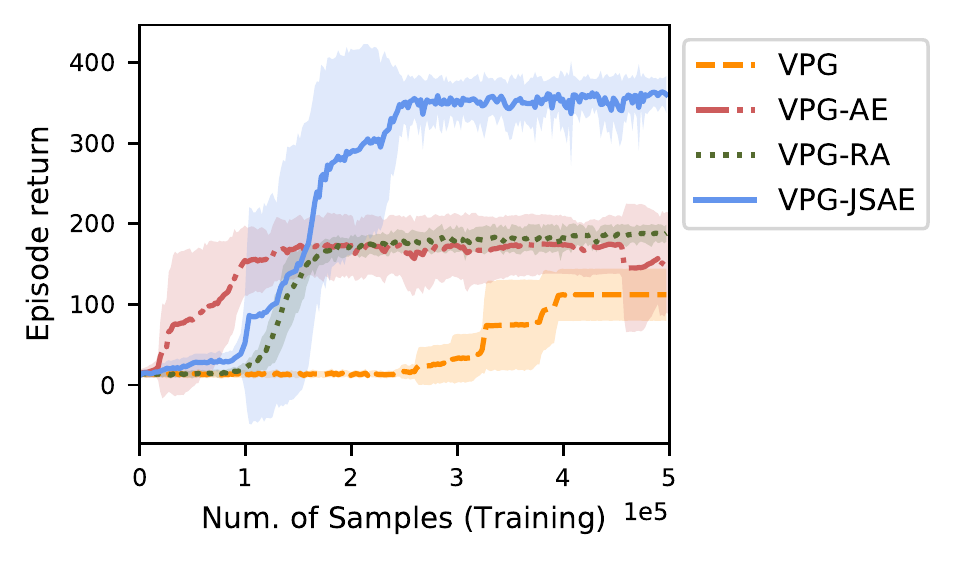}
        \vspace{-1.7em}
        \caption{Slotmachine}
        \label{fig:res_slotmachine}
    \end{subfigure}
    \begin{subfigure}[t]{.34\textwidth}
        \centering
        \includegraphics[width=\linewidth]{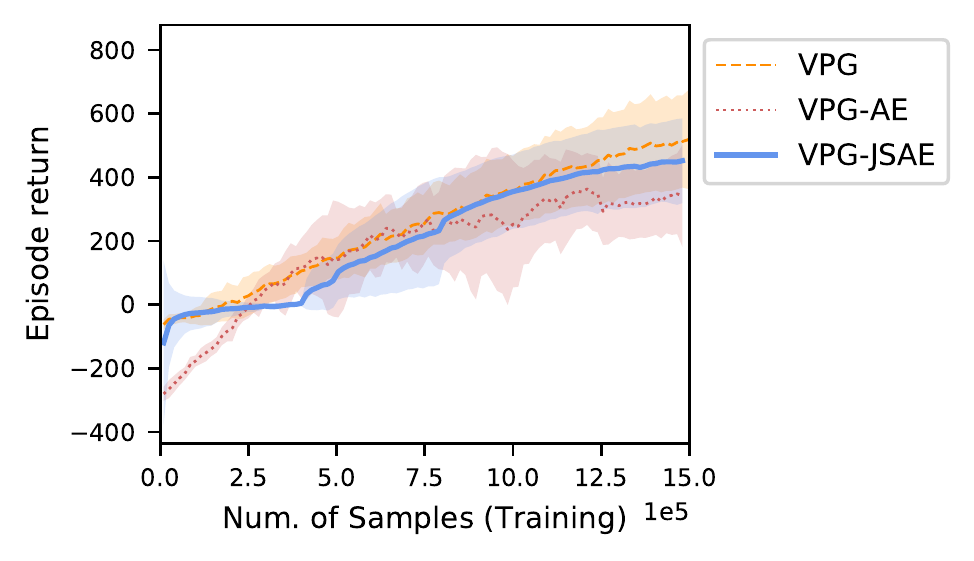}
        \vspace{-1.7em}
        \caption{Half-Cheetah}
        \label{fig:res_mujoco}
    \end{subfigure}\hspace*{\fill}
    \begin{subfigure}[t]{.32\textwidth}
        \centering
        \includegraphics[width=\linewidth]{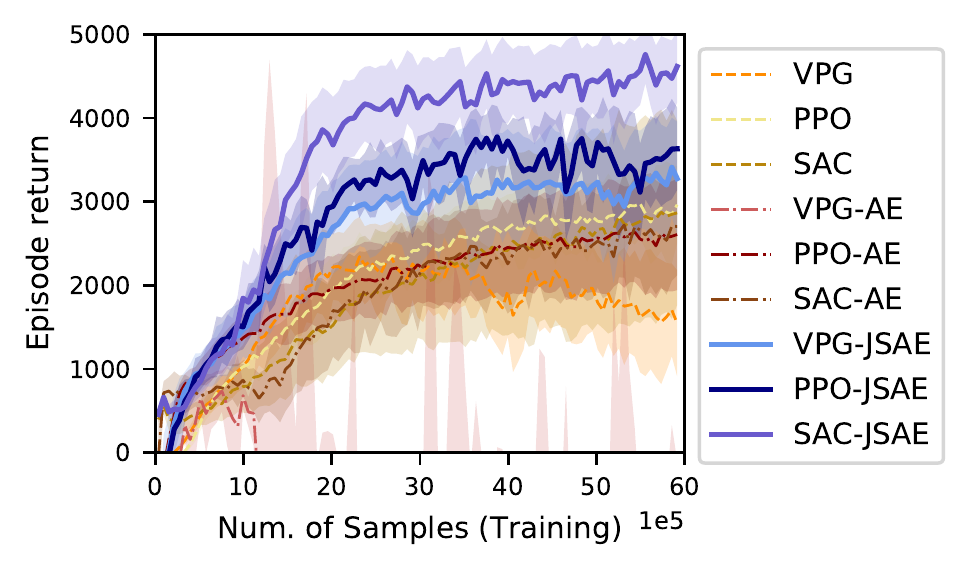}
        \vspace{-1.7em}
        \caption{Ant-v2}
        \label{fig:res_ant}
    \end{subfigure}\hspace*{\fill}
    \begin{subfigure}[t]{.33\textwidth}
        \centering
        \includegraphics[width=\linewidth]{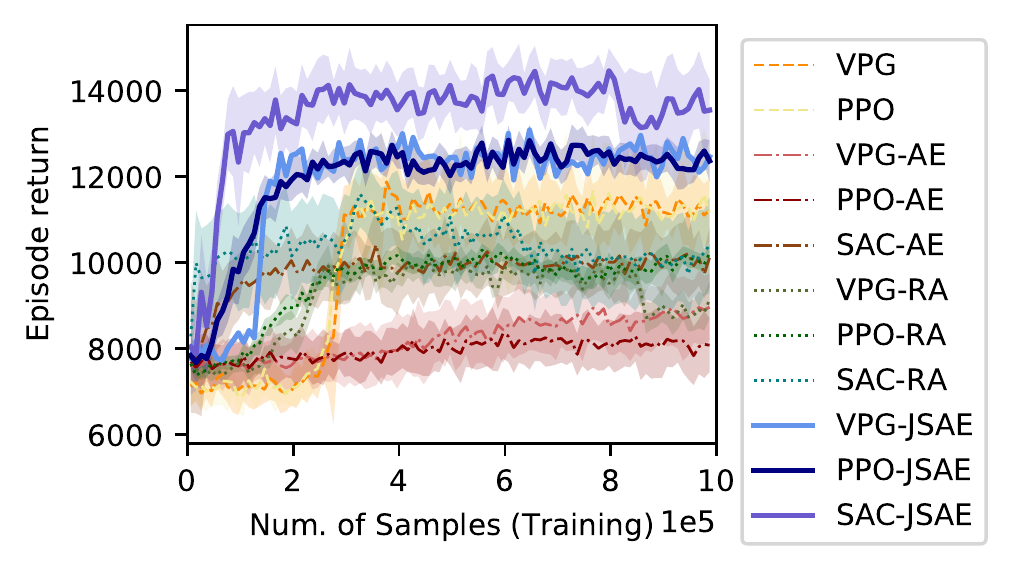}
        \vspace{-1.7em}
        \caption{Recommender system}
        \label{fig:res_recommender}
    \end{subfigure}
    \caption{Performance of our approach (JSAE, Jointly-trained State Action Embedding) compared against benchmarks without embeddings and/or benchmarks with action embeddings by RA (all results are the average return over $10$ episodes, with mean and standard deviation over $10$ random seeds).}
    \label{fig:experiment_results}
    \vspace{-0.4em}
\end{figure*}

\subsection{Proof-of-Concept: Gridworld and Slotmachine}
\label{exp:poc}

\emph{Gridworld:} It is similar to that used by \citet{Chandak2019}.
States are given as a continuous coordinate, while actions are defined via $n$ actuators, equally spaced around the agent, which move the agent in the direction they are pointing towards.
Then each combination of actuators forms an action, resulting in $2^n$ unique actions.
We run two sets of experiments in this environment: (i) we use the continuous state directly and (ii) we discretize the coordinate in a $40\times40$ grid.
The agent receives small step and collision penalties and a large positive reward for reaching the goal state.
Figure \ref{fig:embedding_vis} illustrates the learned embeddings for this environment.
The obtained embeddings for states and actions illustrate that our approach is indeed able to obtain meaningful representations.
Actions are embedded according to the displacement in the Cartesian coordinates they represent (Figure \ref{fig:action_embeddings}).
Figure \ref{fig:state_embeddings} shows that states are embedded according to the coordinate they represent and interestingly, the embeddings capture the $L$-shaped obstacle in the original problem domain.\looseness=-1

\emph{Slotmachine: }
It consists of four reels with $6$ values per reel, each of which can be individually spun by the agent for a fraction of a full turn.
Then, each unique combination of these fractions constitutes an action and the agent receives rewards for aligning reels.
The discrete state is given by the numbers on the front of the reels.

Both scenarios yield large discrete action spaces, rendering them well-suited for evaluating our approach.
The results for the gridworld and slotmachine environments are shown in Figure \ref{fig:res_grid_disc}, \ref{fig:res_grid_cont}, and \ref{fig:res_slotmachine}.
Our approach (JSAE) outperforms all three benchmarks on the discretized gridworld and the slotmachine environments and is comparable for the continuous-state gridworld.
Such observation suggests that our approach is particularly well suited for discrete domains.
Intuitively, this is because the relationship between different states or actions in discrete domains is usually not apparent, e.g., in one-hot encoding.
In continuous domains, however, structure in the state and action space is already captured to some extent, rendering it harder to uncover additional structure by embeddings; nevertheless, when a continuous problem becomes more complicated, our embedding method again shows effectiveness in capturing useful information (see Section~\ref{exp:robotic}).
Notably, the auto-encoder generated embeddings lead to worse performance on both gridworld domains and only slightly improve convergence speed on the slotmachine environment.
As discrete state and action spaces do not necessarily contain an inherent structure, the auto-encoder can only reduce the dimensionality without capturing the underlying structure of the problem domain, and potentially embed behaviorally different states/actions close to each other.
Such initial findings suggest that it is not the dimensionality reduction, but our approach's ability to capture the environment structure in the embedding space that improves the performance.

\subsection{Robotic Control}
\label{exp:robotic}
We use the \textit{half-cheetah-v2} and the \textit{Ant-v2} environments from the robotic control simulator Mujoco \citep{mujoco_paper} and limit the episode length to $250$ and $1000$ steps respectively.
Here, the agent observes continuous states (describing the current position of the cheetah or ant respectively) and then decides on the force  to apply to each of the joints of the robot (i.e., continuous actions).
Rewards are calculated based on the distance covered in an episode.
On these domains, the auto-encoder baseline performs comparable to the baselines without embeddings (Figure \ref{fig:res_mujoco}, \ref{fig:res_ant}).
While the auto-encoder is now able to preserve much of the inherent structure of the state-action space, as we have continuous state and action spaces, it is not able to use the interaction between states and actions to structure the embedding space.
From the results presented in Figure \ref{fig:res_mujoco}, we observe that our method (JSAE) does not outperform the benchmarks on the half-cheetah-v2 environment.
On the Ant-v2 environment, however, our approach outperforms the benchmarks.
Similar to the experiments in a continuous gridworld, it is less likely that embedding uncovers additional structure on top of the structure inherent to the continuous state and action, explaining the comparable performance with and without embeddings on the half-cheetah-v2 domain.
Nevertheless, in the more challenging Ant-v2 environment, the state and action representation is more complex and the agent might benefit from additional structure uncovered via our approach.
By contrasting our approach with the auto-encoder baseline, we can again confirm that the improved performance does not stem from a mere dimensionality reduction in the state and action representations.\looseness=-1

\subsection{Recommender System}
\label{exp:recommender}
In addition, we also test our methodology on a real-world application of a recommender system.
We use data on user behavior from an e-commerce store collected in 2019 \citep{recommender_data}.
The environment is constructed as an $n$-gram based model, following \citet{shani_MDP_recommender_2005}.
Based on the $n$ previously purchased items, a user's purchasing likelihood is computed for each item in the store, leading to a stochastic transition function.
Recommending an item then scales the likelihood for that item by a pre-defined factor.
States are represented as a concatenation of the last $n$ purchased items and each item forms an action.
In this environment we have $835$ items (actions) and approx. $700,000$ states.
The results obtained under various RL methods are reported in Figure~\ref{fig:res_recommender}.
Note that we do not run an SAC benchmark, as this algorithm is designed for continuous action spaces only.
\begin{figure}[b]
    \vspace{-1.6em}
    \centering
    \includegraphics[width=.7\linewidth]{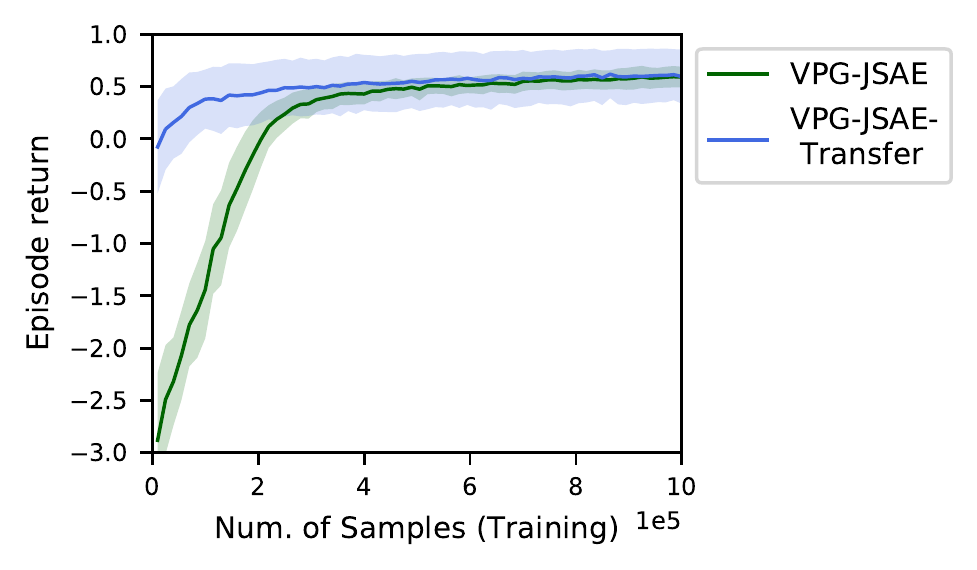}
    \vspace{-1.5em}
    \caption{Transfer learning in the discrete-state gridworld.}
    \label{fig:res_transfer_learning}
\end{figure}
We find that our approach leads to significant improvements in both the convergence speed and final performance compared to the benchmarks.
This result confirms that our method is particularly useful in the presence of large discrete state and action spaces.
Interestingly, the PPO and VPG benchmarks outperform the benchmarks using the RA and AE methodologies.
We conjecture that the action embeddings generated in RA and AE on this environment in fact obfuscate the effect an action has, and thus limit the agent's performance.\looseness=-1

\subsection{Application in Transfer Learning}
\label{exp:transfer}
In addition to improving the convergence speed and the accumulated reward, we test whether the policies learned in one environment can be transferred to a similar one.
We consider two environments (old and new) that differ by only the state space or the action space.
To leverage the previous experience, if the two environments have the same state (or action) space, the state (or action) embedding component in the new environment is initialized with the weights learned from the old environment; we then train our model in this new environment.
For evaluation, we use two gridworlds, where only the numbers of actions differ, i.e., 512 (old) and 2048 (new).
Results in Figure \ref{fig:res_transfer_learning} show that the model with the transferred knowledge outperforms the case where the entire model is trained from scratch, in terms of the convergence speed.
Thus, it shows that the previously learned policy serves as a critical initialization for the new environment and sheds light on the potential of our approach in transfer learning; further evaluations are left for future work.\looseness=-1

\section{Conclusion}
\label{sec:conclusion}
In this paper, we presented a new architecture for jointly training state/action embeddings and combined this with common reinforcement learning algorithms. Our theoretical results confirm the validity of the proposed approach, i.e., the existence of an optimal policy in the embedding space.
We empirically evaluated our method on several environments, where it outperforms state-of-the-art RL approaches in complex large-scale problems.
Our approach is easily extensible as it can be combined with most existing RL algorithms and there are no special restrictions on  embedding model parameterizations.\looseness=-1


\appendix
\section{Appendix}
\label{sec:appendix}
\subsection{Claims}
\label{appendix:claims}

In addition to Assumptions \ref{as:act_mapping} and \ref{as:state_mapping}, we derive three claims on conditional independence from the definition of our embedding model and the Markovian property of the environment, listed below.

\begin{claim}
    \label{claim:a_s_cond_indep_x}
    Action $A_t$ is conditionally independent of $S_t$ given a state embedding $X_t$:
    \begin{align*}
        \begin{aligned}
             P(A_t = a, S_t = s | X_t = x) = P(A_t = a | X_t = x)P(S_t = s | X_t = x) \,.
        \end{aligned}
    \end{align*}
\end{claim}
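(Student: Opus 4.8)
The plan is to read the claim off the forward (generative) structure of the model in Section~\ref{method:architecture}: at each time step $X_t = \phi(S_t)$, then $E_t \sim \pi_i(\cdot \mid X_t)$, and finally $A_t = f(E_t)$. The only channel by which $S_t$ can influence $A_t$ is through $X_t$, so conditioning on $X_t$ severs that channel and renders $A_t$ independent of $S_t$. Equivalently, I would show that the conditional law of $A_t$ given $(S_t, X_t)$ does not actually depend on $S_t$ once $X_t$ is fixed, which immediately yields the product factorization in the statement.

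Concretely, first write $P(A_t = a, S_t = s \mid X_t = x) = P(A_t = a \mid S_t = s, X_t = x)\, P(S_t = s \mid X_t = x)$ by the definition of conditional probability, so it suffices to prove $P(A_t = a \mid S_t = s, X_t = x) = P(A_t = a \mid X_t = x)$. (By Assumption~\ref{as:state_mapping}, $X_t$ is a deterministic function of $S_t$, so $\{X_t = x\}$ is the event $\{S_t \in \phi^{-1}(x)\}$; in the discrete setting treated here this is harmless, and for continuous spaces one repeats the argument with densities.) Next, since $f$ is deterministic (Assumption~\ref{as:act_mapping}), $\{A_t = a\} = \{E_t \in f^{-1}(a)\}$, so $P(A_t = a \mid S_t = s, X_t = x) = \int_{\{e\}=f^{-1}(a)} P(E_t = e \mid S_t = s, X_t = x)\,de$. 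Because the internal policy reads only the embedded state, $E_t$ is drawn from $\pi_i(\cdot \mid X_t)$ conditionally independently of $S_t$, so $P(E_t = e \mid S_t = s, X_t = x) = \pi_i(e \mid x)$, which is free of $s$; hence the integral equals $\int_{\{e\}=f^{-1}(a)} \pi_i(e \mid x)\,de = P(A_t = a \mid X_t = x)$. Substituting back into the first display gives the claim.

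The step that needs the most care — and the one I would treat as the main obstacle — is justifying $E_t \perp S_t \mid X_t$ rigorously rather than by appeal to the architecture diagram. I would make it precise by writing the trajectory distribution as a product over time steps of the point mass of $X_t$ at $\phi(S_t)$, the internal policy $\pi_i(E_t \mid X_t)$, the point mass of $A_t$ at $f(E_t)$, and the transition kernel $\mathcal{T}(S_{t+1}\mid S_t, A_t)$; combined with the Markov property, this factorization shows that no back-door path links $S_t$ to $E_t$ given $X_t$, from which this claim (and the two companion claims in Appendix~\ref{appendix:claims}) follow by inspection. Everything else is routine manipulation of the deterministic maps $\phi$ and $f$.
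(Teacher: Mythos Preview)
Your proposal is correct and is essentially the approach the paper takes: the paper does not give a separate proof of this claim but simply asserts (at the start of Appendix~\ref{appendix:claims}) that Claims~\ref{claim:a_s_cond_indep_x}--\ref{claim:s_n_cond_indep_a_s} are derived ``from the definition of our embedding model and the Markovian property of the environment.'' Your argument makes that derivation explicit by tracing the generative chain $S_t \to X_t \to E_t \to A_t$ and showing that conditioning on $X_t$ removes any residual dependence of $A_t$ on $S_t$, which is exactly the intended reasoning.
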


\begin{claim}
    \label{claim:a_s_cond_indep_x_e}
    Action $A_t$ is conditionally independent of $S_t$ given state and action embeddings $X_t$ and $E_t$:
    \begin{align*}
        \begin{aligned}
             & P(A_t = a, S_t = s | X_t = x, E_t = e) =                       \\
             & P(A_t = a | X_t = x, E_t = e)P(S_t = s | X_t = x, E_t = e) \,.
        \end{aligned}
    \end{align*}
\end{claim}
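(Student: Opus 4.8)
The plan is to obtain this conditional independence directly from Assumption~\ref{as:act_mapping}, which states that $A_t$ is a deterministic function of $E_t$, namely $A_t = f(E_t)$. The guiding observation is that once we condition on $E_t = e$, the action $A_t$ is pinned to the single value $f(e)$ with probability one, so it can carry no residual dependence on $S_t$; the claim should then fall out by a short case analysis.

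First I would record Assumption~\ref{as:act_mapping} in conditional form, $P(A_t = a \mid E_t = e) = \mathbbm{1}[a = f(e)]$, and note that an event of conditional probability one stays of probability one under further conditioning (on events of positive probability within the support), so that
\[
  P(A_t = a \mid X_t = x, E_t = e) = \mathbbm{1}[a = f(e)].
\]
Since $X_t$ and $E_t$ take values in continuous spaces, these identities are to be read at the level of densities / regular conditional distributions, consistently with the rest of the development.

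Next I would split into the cases $a = f(e)$ and $a \neq f(e)$. If $a = f(e)$, then $\{A_t = a\}$ has conditional probability one given $\{X_t = x, E_t = e\}$, so the joint probability on the left-hand side collapses to $P(S_t = s \mid X_t = x, E_t = e)$, which equals $P(A_t = a \mid X_t = x, E_t = e)\,P(S_t = s \mid X_t = x, E_t = e)$ because the first factor is one. If $a \neq f(e)$, the left-hand side is bounded above by $P(A_t = a \mid X_t = x, E_t = e) = 0$, and the right-hand side contains the factor $P(A_t = a \mid X_t = x, E_t = e) = 0$; both sides vanish. In either case the two sides coincide, which establishes the claim.

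I do not anticipate a genuine obstacle here: the argument is essentially immediate once the determinism of $f$ is phrased correctly, and the only point requiring care is the measure-zero nature of the conditioning events, handled as above through densities. In particular, unlike Claim~\ref{claim:a_s_cond_indep_x}, this statement needs only Assumption~\ref{as:act_mapping} and no appeal to the Markov property.
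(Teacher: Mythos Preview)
Your argument is correct and is precisely the intended one: the paper does not spell out a proof of Claim~\ref{claim:a_s_cond_indep_x_e} but simply asserts that it follows ``from the definition of our embedding model and the Markovian property of the environment,'' and your derivation via Assumption~\ref{as:act_mapping} is exactly how that model definition cashes out here. Your closing remark that the Markov property is not needed for this particular claim is a valid sharpening of the paper's blanket attribution.
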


\begin{claim}
    \label{claim:s_n_cond_indep_a_s}
    Next state $S_{t+1}$ is conditionally independent of $E_t$ and $X_t$ given $A_t$ and $S_t$:
    \begin{align*}
        P(S_{t+1} = s', E_t = e, X_t = x | A_t = a, S_t = s) = \\
        P(S_{t+1} = s' | A_t = a, S_t = s) P(E_t = e, X_t = x | A_t = a, S_t = s) \,.
    \end{align*}
\end{claim}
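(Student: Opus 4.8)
\medskip
\noindent\textbf{Proof proposal.}
The plan is to expand the left-hand side by the chain rule following the generative order of the architecture, and then collapse the resulting factors using Assumptions~\ref{as:act_mapping} and~\ref{as:state_mapping} together with the Markov property of the environment.

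First I would write the conditional as a ratio of joints, $P(S_{t+1}=s',E_t=e,X_t=x\mid A_t=a,S_t=s)=P(S_{t+1}=s',E_t=e,X_t=x,A_t=a,S_t=s)/P(A_t=a,S_t=s)$, and factor the numerator in the causal order $S_t\to X_t\to E_t\to A_t\to S_{t+1}$ induced by the model (Figure~\ref{fig:combined_model}):
\begin{align*}
    & P(S_t=s)\,P(X_t=x\mid S_t=s)\,P(E_t=e\mid X_t=x,S_t=s)                                 \\
    & \quad\times P(A_t=a\mid E_t=e,X_t=x,S_t=s)\,P(S_{t+1}=s'\mid A_t=a,E_t=e,X_t=x,S_t=s)\,.
\end{align*}
Next I would simplify the factors one by one. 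By Assumption~\ref{as:state_mapping}, $P(X_t=x\mid S_t=s)$ is the indicator of $x=\phi(s)$. By the definition of the model, the internal policy reads only the embedded state, so $P(E_t=e\mid X_t=x,S_t=s)=\pi_i(e\mid x)$. By Assumption~\ref{as:act_mapping}, $P(A_t=a\mid E_t=e,X_t=x,S_t=s)$ is the indicator of $a=f(e)$. The crucial step is the last factor: since in the model $X_t$ and $E_t$ are agent-internal variables that are generated \emph{before} $A_t$ and the environment transition depends only on the current state and action, the Markov property yields $P(S_{t+1}=s'\mid A_t=a,E_t=e,X_t=x,S_t=s)=P(S_{t+1}=s'\mid A_t=a,S_t=s)$.

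Finally I would pull this simplified last factor out of the numerator, recognize that the remaining product equals $P(E_t=e,X_t=x,A_t=a\mid S_t=s)$, and divide by $P(A_t=a\mid S_t=s)$ — which also cancels the $P(S_t=s)$ against the original denominator — to obtain $P(E_t=e,X_t=x\mid A_t=a,S_t=s)$. Multiplying the two surviving pieces gives exactly the asserted factorization.

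I expect the main obstacle to be the rigorous justification of $P(S_{t+1}=s'\mid A_t=a,E_t=e,X_t=x,S_t=s)=P(S_{t+1}=s'\mid A_t=a,S_t=s)$, which is the only place where both the Markov property of the MDP and the generative structure of the model (that $X_t$ and $E_t$ influence $S_{t+1}$ only through $A_t$) must be invoked together. A secondary, purely technical point is that $E_t$ is a continuous random variable, so the symbols $P(E_t=e\mid\cdot)$ should be read as densities and the implicit marginalizations as integrals over $f^{-1}(a)$; this bookkeeping is routine and does not alter the argument.
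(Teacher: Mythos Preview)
Your argument is correct. The paper does not give a formal proof of this claim at all: it simply asserts that Claims~\ref{claim:a_s_cond_indep_x}--\ref{claim:s_n_cond_indep_a_s} are ``derived from the definition of our embedding model and the Markovian property of the environment'' and then uses Claim~\ref{claim:s_n_cond_indep_a_s} as input to prove the auxiliary Claims~\ref{claim:drop_e_x_for_s_n} and~\ref{claim:drop_e_x_for_a}. Your chain-rule factorization along the generative order $S_t\to X_t\to E_t\to A_t\to S_{t+1}$ is exactly the way to make that assertion precise, and the step you flag as the ``main obstacle'' --- $P(S_{t+1}\mid A_t,E_t,X_t,S_t)=P(S_{t+1}\mid A_t,S_t)$ --- is precisely the Markov property combined with the fact that $X_t,E_t$ influence the environment only through $A_t$, which is the content the paper is appealing to informally. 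So you have supplied the missing details rather than taken a different route.
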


Using Claims \ref{claim:a_s_cond_indep_x} - \ref{claim:s_n_cond_indep_a_s}, we can derive two further auxiliary claims that will be used in the proof of Lemma \ref{lemma:v}.

\begin{claim}
    \label{claim:drop_e_x_for_s_n}
    The probability of next state $S_{t+1}$ is independent of $E_t$ and $X_t$, given state $S_t$ and action $A_t$, i.e.,
    \begin{align*}
        \begin{aligned}
             & P(S_{t+1} = s' | E_t = e, X_t = x, A_t = a, S_t = s) = \\
             & P(S_{t+1} = s' | A_t = a, S_t = s) \,.
        \end{aligned}
    \end{align*}
    \begin{proof}
        From Claim \ref{claim:s_n_cond_indep_a_s}, we have
        \begin{align*}
            P(s', e, x | a, s) =                & P(s' | a, s)P(e, x | a, s)  \\
            P(s' | e, x, a, s) P(e, x | a, s) = & P(s' | a, s) P(e, x | a, s) \\
            P(s' | e, x, a, s) =                & P(s' | a, s) \,.
        \end{align*}
    \end{proof}
\end{claim}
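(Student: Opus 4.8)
The plan is to obtain the claim as an immediate consequence of Claim~\ref{claim:s_n_cond_indep_a_s}, using nothing beyond the chain rule of conditional probability.

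First I would write the joint conditional probability $P(S_{t+1}=s', E_t=e, X_t=x \mid A_t=a, S_t=s)$ in two ways. By the chain rule it factors as $P(S_{t+1}=s' \mid E_t=e, X_t=x, A_t=a, S_t=s)\cdot P(E_t=e, X_t=x \mid A_t=a, S_t=s)$. On the other hand, Claim~\ref{claim:s_n_cond_indep_a_s} asserts that it factors as $P(S_{t+1}=s' \mid A_t=a, S_t=s)\cdot P(E_t=e, X_t=x \mid A_t=a, S_t=s)$. Equating the two expressions and dividing through by the common factor $P(E_t=e, X_t=x \mid A_t=a, S_t=s)$ leaves exactly $P(S_{t+1}=s' \mid E_t=e, X_t=x, A_t=a, S_t=s) = P(S_{t+1}=s' \mid A_t=a, S_t=s)$, which is the statement.

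The step itself has no difficulty; the genuine content lies entirely in Claim~\ref{claim:s_n_cond_indep_a_s}, which is already established and which in turn reflects the generative structure of the model: $S_{t+1}$ is drawn from the environment's transition kernel on $(S_t, A_t)$ alone, while $X_t=\phi(S_t)$ and $E_t$ affect the future only through $A_t=f(E_t)$, so that conditioning on $(S_t,A_t)$ makes $S_{t+1}$ independent of $(X_t,E_t)$; together with the Markov property this yields the required factorization. The only point that warrants care is the cancellation of the common factor: it is valid precisely on configurations $(e,x,a,s)$ with $P(E_t=e, X_t=x \mid A_t=a, S_t=s)>0$, which is also the only regime in which the conditional probabilities appearing in the statement are defined, so for every configuration the claim is either proved or vacuously true.
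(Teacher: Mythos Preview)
Your proposal is correct and matches the paper's proof essentially line for line: the paper also writes $P(s',e,x\mid a,s)$ via the chain rule, invokes Claim~\ref{claim:s_n_cond_indep_a_s} to equate it with $P(s'\mid a,s)P(e,x\mid a,s)$, and cancels the common factor. Your added remark about the cancellation being valid precisely where the conditional probabilities are defined is a nice touch that the paper leaves implicit.
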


\begin{claim}
    \label{claim:drop_e_x_for_a}
    The probability of action $A_t$ is independent of $S_t$ and $X_t$, given action embedding $E_t$, i.e.,
    \begin{align*}
    \begin{aligned}
        P(A_t = a | S_t = s, X_t = x, E_t = e) = P(A_t = s | E_t = e) \,.
    \end{aligned}
    \end{align*}
    \begin{proof}
        From Claim \ref{claim:a_s_cond_indep_x_e}, we have
        \begin{align*}
            P(a, s | x, e) = & P(a | x, e)P(s | x, e) \\
            P(a | s, x, e) = & P(a | x, e) \,.
        \end{align*}
        Since action $a$ only depends on the action embedding $e$ in our model, this becomes
        \begin{align*}
            P(a | s, x, e) = & P(a | e) \,. 
        \end{align*}
    \end{proof}
\end{claim}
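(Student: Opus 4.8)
The plan is to establish the stated conditional independence of $A_t$ from $(S_t, X_t)$ given $E_t$ in two stages: first removing the original state $S_t$ from the conditioning set, and then removing the embedded state $X_t$. I would begin from Claim~\ref{claim:a_s_cond_indep_x_e}, which asserts that $A_t$ and $S_t$ are conditionally independent given $(X_t, E_t)$, i.e., $P(a, s \mid x, e) = P(a \mid x, e)\, P(s \mid x, e)$. This is the natural starting point, since it already encodes the independence we need but with an extra conditioning variable that must be stripped away.

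Next I would apply the chain rule to the left-hand side, writing $P(a, s \mid x, e) = P(a \mid s, x, e)\, P(s \mid x, e)$. Equating this with the factorization from Claim~\ref{claim:a_s_cond_indep_x_e} and cancelling the common factor $P(s \mid x, e)$ — which requires only the mild regularity condition that the conditioning event has positive probability, $P(s \mid x, e) > 0$ — yields the intermediate identity $P(a \mid s, x, e) = P(a \mid x, e)$. This shows that, once both embeddings are known, the original state $S_t$ contributes no further information about the action.

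The final step is to eliminate the embedded state $X_t$ from the conditioning. Here I would invoke Assumption~\ref{as:act_mapping}, which guarantees that $A_t = f(E_t)$ is a deterministic function of the action embedding $E_t$ alone. Since $A_t$ is then a measurable function of $E_t$ with no residual randomness, its conditional law is fully pinned down by $E_t$, and conditioning additionally on $X_t$ cannot alter it: $P(a \mid x, e) = P(a \mid e)$. Chaining this with the intermediate identity gives the desired conclusion $P(a \mid s, x, e) = P(a \mid e)$.

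The main obstacle is conceptual rather than computational: one must argue carefully that the structural fact $A_t = f(E_t)$ genuinely severs any dependence of $A_t$ on $X_t$, even though $X_t$ and $E_t$ are in general correlated (both arise from the same trajectory). Assumption~\ref{as:act_mapping} is exactly what licenses this — because $A_t$ carries no information beyond what is in $E_t$, no additional conditioning variable can change its conditional distribution. By comparison, the cancellation of $P(s \mid x, e)$ in the first stage is entirely routine once the conditioning event is well-defined.
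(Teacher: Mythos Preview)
Your proposal is correct and follows essentially the same route as the paper's own proof: start from Claim~\ref{claim:a_s_cond_indep_x_e} to obtain $P(a\mid s,x,e)=P(a\mid x,e)$, then drop $x$ by appealing to the model's assumption that $A_t$ depends only on $E_t$ (Assumption~\ref{as:act_mapping}). Your write-up is simply more explicit about the chain-rule cancellation and the positivity caveat, but the argument is the same.
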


\subsection{Proof of Lemma~\ref{lemma:v}}
\label{appendix:prrof_L1}
\textbf{Lemma 1.}
\textit{
    Under Assumptions \ref{as:act_mapping} and \ref{as:state_mapping}, for policy $\pi$ in the original problem domain, there exists $\pi_i$ such that}
\begin{align*}
    v^\pi (s) & = \sum_{a \in \mathcal{A}} \int_{\{e\} = f^{-1}(a)} \pi_i(e | x=\phi(s)) Q^{\pi}(s, a) \,de
\end{align*}
\begin{proof}
    The Bellman equation for a MDP is given by
    \begin{equation*}
        v^\pi (s) = \sum_{a \in \mathcal{A}} \pi(a|s) \sum_{s' \in \mathcal{S}} P(s' | s, a) G \,,
    \end{equation*}
    where $G$ denotes the return, i.e. $[\mathcal{R}(s, a) + \gamma v^{\pi}(s')]$, which is a function of $s, a$, and $s'$.
    By re-arranging terms we get
    \begin{align*}
        v^\pi (s) & = \sum_{a \in \mathcal{A}} \sum_{s' \in \mathcal{S}} \pi(a|s) P(s' | s, a) G                       \\
                  & = \sum_{a \in \mathcal{A}} \sum_{s' \in \mathcal{S}} \pi(a|s) \frac{P(s' , s, a)}{P(s, a)} G       \\
                  & = \sum_{a \in \mathcal{A}} \sum_{s' \in \mathcal{S}} \pi(a|s) \frac{P(s' , s, a)}{\pi(a|s) P(s)} G \\
                  & = \sum_{a \in \mathcal{A}} \sum_{s' \in \mathcal{S}} \frac{P(a | s' , s) P( s', s)}{P(s)} G \,.    
    \end{align*}
    Since $s$ can be deterministically mapped to $x$ via $x = \phi(s)$, by Assumption \ref{as:state_mapping}, we have
    \begin{align*}
        v^\pi (s) & = \sum_{a \in \mathcal{A}} \sum_{s' \in \mathcal{S}} \frac{P(x, a | s' , s) P( s', s)}{P(s)} G               \\
                  & = \sum_{a \in \mathcal{A}} \sum_{s' \in \mathcal{S}} \frac{P(x, a | s' , s) P( s', s) P(x|s)}{P(x|s) P(s)} G \\
                  & = \sum_{a \in \mathcal{A}} \sum_{s' \in \mathcal{S}} \frac{P(x, a , s' , s) P(x|s)}{P(x, s)} G               \\
                  & = \sum_{a \in \mathcal{A}} P(x|s) \sum_{s' \in \mathcal{S}} \frac{P(x, a , s' , s)}{P(x, s)} G               \\
                  & = \sum_{a \in \mathcal{A}} P(x|s) \sum_{s' \in \mathcal{S}} \frac{P(a , s' | x, s)P(x, s)}{P(x, s)} G        \\
                  & = \sum_{a \in \mathcal{A}} P(x|s) \sum_{s' \in \mathcal{S}} P(a , s' | x, s) G                               \\
                  & = \sum_{a \in \mathcal{A}} P(x|s) \sum_{s' \in \mathcal{S}} P(s' | a, x, s) P(a | s, x) G \,.                
    \end{align*}
    From Claim \ref{claim:a_s_cond_indep_x}, we know that $P(a | s, x) = P(a | x)$. Therefore,
    \begin{align*}
        v^\pi (s) & = \sum_{a \in \mathcal{A}} P(x|s) \sum_{s' \in \mathcal{S}} P(s' | a, x, s) P(a | x) G \,. 
    \end{align*}
    Since $P(x | s)$ is deterministic by Assumption \ref{as:state_mapping} and evaluates to $1$ for the representation of $\phi(s) = x$, we can rewrite the equation above to
    \begin{align*}
        v^\pi (s) & = \sum_{a \in \mathcal{A}} \sum_{s' \in \mathcal{S}} P(s' | a, x, s) P(a | x) G \,. 
    \end{align*}

    We now proceed to establish the relationship with the action embedding.
    From above we have
    \begin{align*}
        v^\pi (s) & = \sum_{a \in \mathcal{A}} \sum_{s' \in \mathcal{S}} P(a | x) P(s' | a, x, s) G                        \\
                  & = \sum_{a \in \mathcal{A}} \sum_{s' \in \mathcal{S}}  P(a| x) \frac{P(s', a, x, s)}{P(a, x, s)}  G \,. 
    \end{align*}
    From Claim \ref{claim:a_s_cond_indep_x}, $a$ and $s$ are conditionally independent given $x$.
    Therefore, $P(a, x, s) = P(a | x) P(s | x) P(x)$, which allows us to rewrite the above equation as
    \begin{align*}
        v^\pi (s) & = \sum_{a \in \mathcal{A}} \sum_{s' \in \mathcal{S}} P(a| x) \frac{P(s', a, x, s)}{P(a | x) P(x, s)}  G \\
                  & = \sum_{a \in \mathcal{A}} \sum_{s' \in \mathcal{S}} \frac{P(a | s', x, s) P(s', x, s)}{P(x, s)}  G \,. 
    \end{align*}
    By the law of total probability, we can now introduce the new variable $e$, which is the embedded action. Then
    \begin{align*}
        v^\pi (s) = & \sum_{a \in \mathcal{A}} \sum_{s' \in \mathcal{S}} \int_e \frac{P(a, e | s', x, s) P(s',x, s)}{P(x, s)}  G \,de                  \\
        =           & \sum_{a \in \mathcal{A}} \sum_{s' \in \mathcal{S}} \int_e P(e | x, s) \frac{P(a, e | s', x, s) P(s', x, s)}{P(e | x, s) P(x, s)} G \,de \,. 
    \end{align*}
    Since $e$ is uniquely determined by $x$, we can drop $s$ in $P(e|x,s)$, and thus
    \begin{align*}
        v^\pi (s) = & \sum_{a \in \mathcal{A}} \sum_{s' \in \mathcal{S}} \int_e P(e | x) \frac{P(a, e, s', x, s) }{P(e, x, s)}  G \,de \\
        =           & \sum_{a \in \mathcal{A}} \sum_{s' \in \mathcal{S}} \int_e P(e | x) P(a, s' | e, x, s) G \,de                     \\
        =           & \sum_{a \in \mathcal{A}} \sum_{s' \in \mathcal{S}} \int_e P(e | x) P(s' | a, e, x, s) P(a | x, e, s) \,de \,.  
    \end{align*}
    Using the previously derived Claims \ref{claim:drop_e_x_for_s_n} and \ref{claim:drop_e_x_for_a}, the above equation can be simplified to
    \begin{align*}
        v^\pi (s) & = \sum_{a \in \mathcal{A}} \sum_{s' \in \mathcal{S}} \int_e P(e | \phi(s)) P(s' | a, s) P(a | e) G \,de \,. 
    \end{align*}
    Since the function $f$, mapping $e$ to $a$, is deterministic by Assumption \ref{as:act_mapping} and only evaluates to 1 for a particular $a$ and 0 elsewhere, we can rewrite this further as
    \begin{align*}
        v^\pi (s) & = \sum_{a \in \mathcal{A}} \sum_{s' \in \mathcal{S}} \int_{f^{-1}(a)} P(e | \phi(s)) P(s' | a, s) G \,de \,. 
    \end{align*}
    Summarizing the terms, this becomes
    \begin{align*}
        v^\pi (s) & = \sum_{a \in \mathcal{A}} \int_{f^{-1}(a)} \pi_i(e | \phi(s)) Q^{\pi}(a, s) \,de \,. 
    \end{align*}
\vspace{-1em}
\end{proof}

\subsection{Proof of Lemma 2}
\label{appendix:prrof_L2}

\textbf{Lemma 2.}
\textit{Under Assumptions \ref{as:act_mapping}--\ref{as:unique_s_to_x}, for all deterministic functions $f$ and $\phi$ which map each point $e \in \mathcal{E}$ and $s \in \mathcal{S}$ to an action $a \in \mathcal{A}$ and to an embedded state $x \in \mathcal{X}$, and the internal policy $\pi_i$ parameterized by $\theta$, the gradient of the internal policy's performance function $\frac{\partial J_i(\theta)}{\partial \theta}$ equals the gradient of the overall policy's performance function $\frac{\partial J_o(\phi, \theta, f)}{\partial \theta}$, i.e.,}
\begin{align*}
    \frac{\partial J_i(\theta)}{\partial \theta} & = \frac{\partial J_o(\phi, \theta, f)}{\partial \theta}  \,.
\end{align*}


\begin{proof}
    Recall from Lemma \ref{lemma:v} that the overall policy is defined using the internal policy
    \begin{align*}
        \pi_o (a|s) & = \int_{f^{-1}(a)} \pi_i(e | \phi(s)) \,de \,.
    \end{align*}
    We can then define the performance function of the overall policy using the internal policy as
    \begin{align*}
        J_o(\phi, \theta, f) = \sum_{s \in \mathcal{S}} d_0(s) \sum_{a \in \mathcal{A}} & \int_{f^{-1}(a)} \pi_i(e | \phi(s)) Q^{\pi_o}(s, a) \,de \,.
    \end{align*}
    The gradient of this performance function w.r.t. $\theta$ is
    \begin{align}
        \label{eq:replacingQ}
        \begin{aligned}
            \frac{\partial J_o(\phi, \theta, f)}{\partial \theta} = \frac{\partial}{\partial \theta} & \Big[ \sum_{s \in \mathcal{S}} d_0(s) \sum_{a \in \mathcal{A}} \int_{f^{-1}(a)} \pi_i(e | \phi(s)) Q^{\pi_o}(s, a) \,de \Big ] \,.
        \end{aligned}
    \end{align}

    \begin{figure}[htbp]
        \centering
        \includegraphics[width=.6\linewidth]{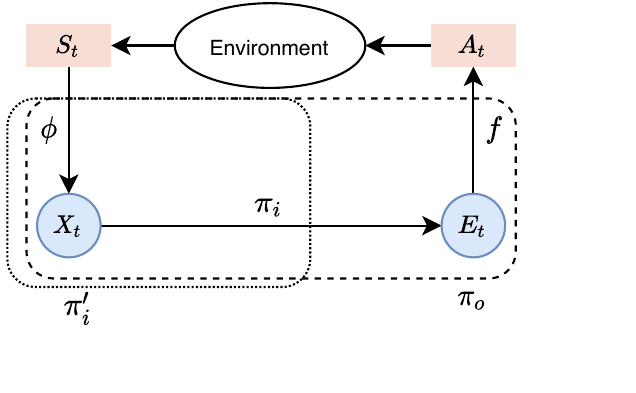}
        \vspace{-1.8em}
        \caption{Illustration of $\pi_i'$ in the combined model.}
        \label{fig:q_pi_i}
    \end{figure}


    Now consider another policy $\pi_i'$ as illustrated in Figure~\ref{fig:q_pi_i}, i.e., a policy that takes the raw state input $S_t$ and outputs $E_t$ in embedding space.
    By Assumption \ref{as:act_mapping}, $Q^{\pi_o}(s, a) = Q^{\pi_i'}(s, e)$ as $e$ is deterministically mapped to $a$.
    However, $Q^{\pi_i}(x, e) = \mathbb{E}_{s \sim \phi^{-1}(x)}[Q^{\pi_i'}(s, e)] = \mathbb{E}_{s \sim \phi^{-1}(x)}[Q^{\pi_o}(s, a)]$, where $\phi^{-1}(\cdot)$ is the inverse mapping from $x$ to $s$.
    Nevertheless, by Assumption \ref{as:unique_s_to_x}, $\mathbb{E}_{s \sim \phi^{-1}(x)}[Q^{\pi_o}(s, a)]=Q^{\pi_o}(s, a)$.
    Therefore, we can replace the overall $Q^{\pi_o}(a, s)$ in Equation~(\ref{eq:replacingQ}) by the internal $Q^{\pi_i}(\phi(s),e)$ as follows

    \begin{align*}
        \frac{\partial J_o(\phi, \theta, f)}{\partial \theta} = \frac{\partial}{\partial \theta} & \Big[ \sum_{s \in \mathcal{S}} d_0(s) \sum_{a \in \mathcal{A}}             \\
                                                                                                 & \int_{f^{-1}(a)} (\pi_i(e | \phi(s)) Q^{\pi_i}(\phi(s), e) \Big ] \,de \,.
    \end{align*}

    Then the deterministic mapping of $e$ to $a$ by function $f$ further allows us to replace the integral over $f^{-1}(a)$ and the summation over $\mathcal{A}$ with an integral over $e$.
    Moreover, by Assumption \ref{as:unique_s_to_x}, we have $d_0(s) = d_0(x=\phi(s))$.
    Therefore, the above can be rewritten as
    \begin{align*}
        \frac{\partial J_o(\phi, \theta, f)}{\partial \theta}
        = & \frac{\partial}{\partial \theta}\Big[ \int_{x\in\mathcal{X}} d_0(x) \int_{e\in \mathcal{E}} \pi_i(e | x) Q^{\pi_i}(x, e) \,de \,dx \Big ] \\
        = & \frac{\partial J_i(\theta)}{\partial \theta}\,.
    \end{align*}
\end{proof}

\subsection{Hyper Parameters}
\label{parameters:model_imp}
We conduct a gridsearch over different parameter combinations for each experiment and report the average per episode return and standard deviation achieved over $10$ runs with different random seeds, using the hyper parameters that perform the best in the gridsearch.
The learning rates for the actor and critic are searched over $\{0.001, 0.0003, 0.0001\}$.
We search the initialization value for the std. dev. of the actor over $\{0.3, 0.6, 0.8, 1.0\}$ and keep $\gamma=0.99$ and $\lambda=0.97$ fixed.
The batch size is searched over $\{256, 512\}$ and the network sizes for the actor and critic are searched over $\{(64, 64), (128, 128)\}$.
In discrete domains, we use a nearest neighbour mapping function to parameterize $f$, i.e., we map the output of the internal policy $\pi_i$ to the action that is closest in embedding space $\mathcal{E}$.
For continuous domains, $f$ is parameterized by a single-layer neural network with $64$ hidden units.
The dimensions of state and action embeddings respectively are fixed at $8$ and $2$ for the gridworld and searched over $\{8, 16\}$ and $\{2, 4\}$ for the slotmachine, $\{10, 20\}$ and $\{6, 10\}$ for the half-cheetah, $\{ 16, 32, 64\}$ and $\{6, 8, 10\}$ for the Ant-v2, and $\{8, 16, 32\}$ and $\{4, 8, 12\}$ for the recommender system.
We use the Adam optimizer for all experiments. 
We use tanh activation functions throughout and also use a tanh non-linearity to bound the range of the learned embeddings.
The total number of training steps can be seen in Figure \ref{fig:experiment_results}.
We adopt the same parameterizations for the RA and AE baseline to ensure comparability.
The parameters that performed best will be released separately along with the code upon publication of this paper.

\bibliographystyle{ACM-Reference-Format}
\bibliography{main}


\end{document}